\newtheorem{thm}{Theorem}
\newtheorem{lem}[thm]{Lemma}
\newtheorem{prop}[thm]{Proposition}
\definecolor{mygreen}{rgb}{0,0.6,0}
\definecolor{mygray}{rgb}{0.5,0.5,0.5}
\definecolor{mymauve}{rgb}{0.58,0,0.82}
\begin{document}

\title{Unifying the Stochastic Spectral Descent for Restricted Boltzmann Machines with Bernoulli or Gaussian Inputs}
\author[1]{Kai Fan\thanks{Email: kai.fan@stat.duke.edu}}

\affil[1]{Duke University}

\maketitle

\begin{abstract}
Stochastic gradient descent based algorithms are typically used as the general  optimization tools for most deep learning models. 
A Restricted Boltzmann Machine (RBM) is a probabilistic generative model that can be stacked to construct deep architectures. 
For RBM with Bernoulli inputs, non-Euclidean algorithm such as stochastic spectral descent (SSD) has been specifically designed to speed up the convergence with improved use of the gradient estimation by sampling methods. 
However, the existing algorithm and corresponding theoretical justification depend on the assumption that the possible configurations of inputs are finite, like binary variables. 
The purpose of this paper is to generalize SSD for Gaussian RBM being capable of modeling continuous data, regardless of the previous assumption. 
We propose the gradient descent methods in non-Euclidean space of parameters, via deriving the upper bounds of logarithmic partition function for RBMs based on Schatten-$\infty$ norm.
We empirically show that the advantage and improvement of SSD over stochastic gradient descent (SGD). 
\end{abstract}

\section{Introduction}
\label{sec:intro}

A crucial challenge to many deep learning models involves optimizing non-convex objective functions over continuous high dimensional matrix or tensor spaces, with large scale training dataset. 
The stochastic version of gradient descent or quasi-Newton method is ubiquitously applied to optimize the parameters of these models, intending to find an acceptable local optima. 
However, exploiting the inhomogeneous curvature of loss functions or the corresponding information in their gradients more effectively can greatly speed up the overall inference. 
\cite{carlson2015stochastic} developed stochastic spectral descent (SSD) method by changing the space in which the gradient descent performs to better utilize the natural geometry implied in the Bernoulli Restricted Boltzmann Machine (RBM) \cite{hinton2006fast} by operating in the $l_\infty$ and Schatten-$\infty$ (S-$\infty$) norms \cite{bhatia2013matrix}.
The advent of such customized decent method encourages more specific designed algorithms that allow for optimizing discrete deep modeling, \cite{carlson2015preconditioned,carlson2016stochastic}. 
Figure \ref{fig:ssd_eg} illustrates that SSD tends to find the shortcut path toward the local optimal. 

\begin{figure}[t]
\centering
\includegraphics[width=41mm]{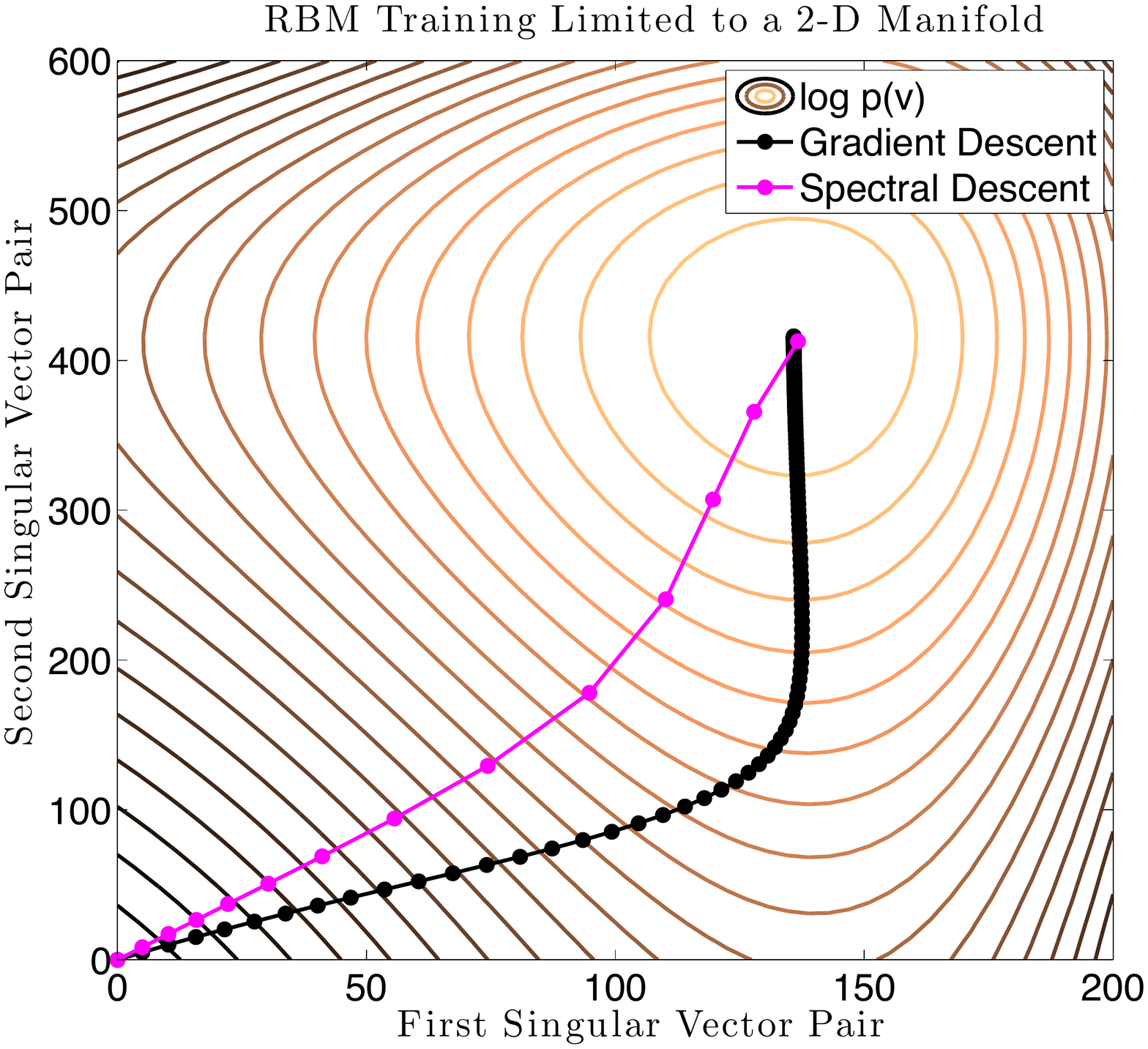}
\includegraphics[width=41mm]{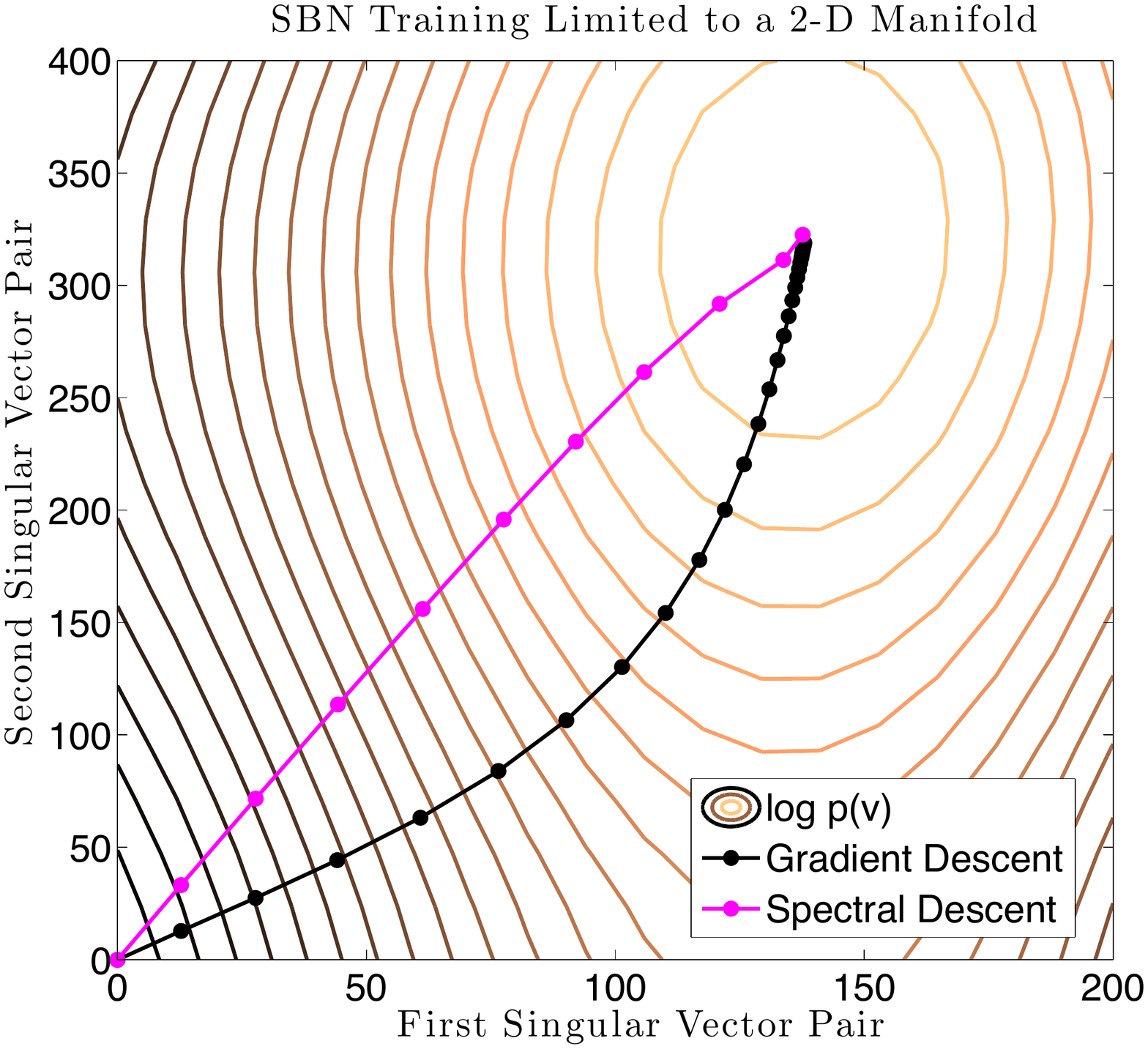}
\caption{Two examples on comparing the optimized path.}
\label{fig:ssd_eg}
\end{figure}

The previous research of SSD is however restrictive and it is theoretically correct for discrete data with finite domain. 
A common trick to modeling real-valued data by RBM is normalizing each input variable to $[0,1]$ interval and considering it as Bernoulli distributed variable \cite{hinton2006reducing}. 
This approach has its own limitation as well since it usually requires the data input to be bounded. 
This means the generalization of RBM for Gaussian distributed input (typically denoted as Gaussian RBM or abbreviated as GRBM) is important. 
Alternatively, we can further generalize the SSD algoirthm that can be used to optimize GRBM.

Along the lines of \cite{carlson2015stochastic}, it is crucially required to derive an upper bound for loss function of GRBM in the sense of spectral norm or infinity norm. 
Since the GRBM will basically introduce another parameter $C^{-1}$ to represent the inverse of covariance matrix for observed variable and modify the energy function to quadratic accordingly, to bound the loss function is not as trivial for GRBM with real valued visible inputs.
To achieve this desideratum, we make an additional but reasonable assumption: the norm of weight matrix is bounded by some constant. 
This assumption is in fact implicitly employed in real deep learning programming. Most of weights regularization tricks are forcing a smaller or bounded norm, such as renormalization or weights decay \cite{hinton2012improving}. 
Under this assumption, we can successfully derive an upper bound of loss function by a local approximation with first order Taylor expansion plusing the error term in $l_\infty$ or Schattern-$\infty$ norm. 
To the end, we can unify the SSD algorithm for RBM with either binary or real valued inputs, being a substitute of vanilla SGD for RBM and able to apply various adaptive preconditioners.

\subsubsection{Related Works}

\cite{carlson2015stochastic} is the most related work of this paper, since our work is its generalization, and a demonstration of its correctness. 
In addition, many stochastic gradient descent algorithms are developed for deep learning problems in recent years, including Adagrad \cite{duchi2011adaptive}, Rmsprop \cite{tieleman2012lecture}, AdaDelta \cite{zeiler2012adadelta}, Adam \cite{kingma2014adam}. 
These methods are all trying to approximate the inverse of Hessian matrix in an efficient way and use it as a preconditioner. 
However, the way they derive the update rule for parameter still falls into the Taylor approximation in the sense of $l_2$ or Frobenius norm, significantly differing from SSD. 
Another work \cite{dauphin2014identifying} mentioned that the pathological curvature can make the algorithms get stuck more iterations for escaping the saddle point, and proposed another upper bound of loss function by adjusting quadratic term of Taylor expansion. 
It basically set the negative eigenvalue of Hessian matrix to be its absolute value and plugged it back to the quadratic term, which indicated it still considers the $l_2$ norm. 
AdaMax \cite{kingma2014adam} is one algorithm considering the $l_\infty$ norm for vector space in a moving average way, which is also different from the S-$\infty$ in matrix space.

As to Gaussian RBM, a typical definition with diagonal covariance matrix is defined in \cite{cho2013gaussian}, and this work focused on the parallel tempering algorithms. 
Our analysis of SSD w.r.t. GRM is not limited to diagonal covariance matrix, although it is more efficient and works well in real implementation. 
In \cite{hinton2010modeling}'s work, it introduced the model called Factorized Third-Order RBM intending to model pixel mean and covariances. 
This model can be considered as one variant of GRBM, while the covariance matrix is presumingly defined as some factorized structure. 
The parameterization of such a covariance matrix is complicated, and difficult to generalize with SSD in the theoretical aspect.

\section{Preliminaries}
\label{sec:pre}

\subsection{RBM with Bernoulli Input}
We briefly review the traditional RBM based on \cite{salakhutdinov2008quantitative}. 
The RBM is a two-layer binary Markov Random Field, where the model includes observed layer units $\mathbf{v} \in \{0,1\}^{N_v}$ and its pairwise connected hidden units $\mathbf{h} \in \{0,1\}^{N_h}$. 
The ``restricted" means there exists a matrix $W\in \mathbb{R}^{N_v\times N_h}$ to capture the symmetric interaction cross two layers but no self interaction within the layer, thus defining the energy function of the state $\{\mathbf{v}, \mathbf{h}\}$ as 
\begin{align}\label{eq:brbm}
E(\mathbf{v},\mathbf{h}; \bm\theta) = - \mathbf{v}^\top W \mathbf{h} - \mathbf{v}^\top \mathbf{b} - \mathbf{h}^\top \mathbf{a}
\end{align} 
where all parameters $\bm\theta=\{W, \mathbf{b}, \mathbf{a}\}$. 
The probability density function of observed variable is
\begin{align}
p(\mathbf{v};\bm\theta) &= \frac{1}{Z(\bm\theta)}\sum_{\mathbf{h}} \exp\left( E(\mathbf{v},\mathbf{h}; \bm\theta) \right) \label{eq:pv}\\
Z(\bm\theta) &= \sum_{\mathbf{v}} \sum_{\mathbf{h}} \exp\left( E(\mathbf{v},\mathbf{h}; \bm\theta) \right) \label{eq:binary_Z}
\end{align}
where $Z(\bm\theta)$ is the partition function. The objective is to minimize the negative log-likelihood over the dataset $\{\mathbf{v}_{n=1}^N\}$, 
\begin{align} \label{eq:loss}
\mathcal{L}(\bm\theta) = - \frac{1}{N} \sum_{n=1}^N \log p(\mathbf{v}_n;\bm\theta) = f(\bm\theta) + g(\bm\theta)
\end{align}
Notice that we can decompose the objective function into the summation of two parts: the data independent log-partition function $f(\bm\theta) = \log Z(\bm\theta)$, and the unnormalized data negative log-likelihood $g(\bm\theta) = - \frac{1}{N} \sum_{n=1}^N \sum_{\mathbf{h}} \exp\left( E(\mathbf{v}_n,\mathbf{h}; \bm\theta) \right)$.  

The derivative of the objective function w.r.t. $W$ is:
\begin{align}
\nabla_W \mathcal{L}(\bm\theta) = \mathbb{E}_p[\mathbf{v}\mathbf{h}^\top] - \mathbb{E}_e[\mathbf{v}\mathbf{h}^\top]
\end{align}
where $\mathbb{E}_p[\cdot]$ represents the expectation w.r.t. the model defined  distribution, and $\mathbb{E}_e[\cdot]$ denotes the empirical distribution defined on dataset. 
However, $\mathbb{E}_p[\cdot]$ involves expectation or summation that cannot be analytically computed. 
In practice parameter learning is performed by following an approximation to gradient of a different objective function, which means $\mathbb{E}_p[\cdot]$ is approximated by the distribution of samples obtained from running the Gibbs sampling of RBM for $k$ steps.
This is called the Contrastive Divergence (CD-k) learning \cite{hinton2002training}. 
$k\rightarrow\infty$ recovers the $\mathbb{E}_p[\cdot]$, however, $k$ is set to 1 or small number in practice. 
Alternatively, Persistent CD \cite{tieleman2008training} is more effective.

\subsection{Schatten-$p$ Norm}

For $p\geq 1$, the $l_p$ norm in finite dimensions for a vector $\mathbf{x}$ is defined as $\|\mathbf{x}\|_p=\left( \sum_i |x_i|^p\right)^{1/p}$. 
As $p\rightarrow\infty$, we obtain the $l_\infty$ norm, $\|\mathbf{x}\|_\infty=\max_i\{|x_i|\}$. 
Suppose $\bm\lambda(X)$ is the vector of singular values of matrix $X\in\mathbb{R}^{m\times n}$, we can similarly define the Schatten-$p$ norm $\|X\|_{S_p}=\|\bm\lambda(X)\|_p$, and $\|X\|_{S_\infty}=\|\bm\lambda(X)\|_\infty$. 
Notice singular values are non-negative, so we can omit the absolute operator for $\lambda_i$.

\subsection{SSD for Bernoulli Inputs}
Many stochastic gradient descent algorithms have been developed for discrete data modeling \cite{carlson2015stochastic,carlson2015preconditioned,carlson2016stochastic}, 
due to the fact that
\begin{align*}
\mathcal{L}(W+\Delta W) \leq \mathcal{L}(W) + \langle \nabla_W \mathcal{L}, \Delta W \rangle + \frac{N_vN_h\|\Delta W\|_{S_\infty}^2}{2} ,
\end{align*}
where inner product $\langle\cdot,\cdot \rangle$ means trace of matrix product. 
In general, if we consider the error term above in $l_2$ norm, the local minimization of this objective function indicates $\Delta W \propto \nabla_W \mathcal{L}$, i.e., in practice the gradient estimated by CD-$k$ can be directly used for traditional gradient descent. 
However, in the sense of $S_\infty$ norm, the induced minimization is no longer to maximize in the direction of the gradient. 
It indicates $\Delta W \propto \|\bm\lambda(W)\|_1 UV^\top$, where $U,V$ are unitary matrix obtained from proceeding SVD to $\nabla_W \mathcal{L}$. 
Analogously, SSD performs parameter update as $W \leftarrow W - \epsilon \Delta W$, where $\epsilon$ is the step size.

\section{Gaussian RBM}

In this section, we will generalize the SSD to handle Gaussian RBM (GRBM). Unlike Bernoulli RBM, the observed variable $\mathbf{v} \in \mathbb{R}^{N_v}$ is assumed to be a real-value vector as Gaussian distributed variables, while the binary latent variable $\mathbf{h}$ remains unchanged. 
The Gaussian RBM we discussed here is defined with the following energy function. 
\begin{align}\label{eq:grbm}
E_G(\mathbf{v},\mathbf{h};\bm\theta) = -\mathbf{v}^\top C^{-1} W\mathbf{h}+\frac{(\mathbf{v}-\mathbf{b})^\top C^{-1}(\mathbf{v}-\mathbf{b})}{2}-\mathbf{h}^\top\mathbf{a}
\end{align}
where $\bm\theta=\{W, \mathbf{b}, \mathbf{a}, C^{-1}\}$. 
Notice that if we also set the hidden variable as Gaussian latent variable, it means that the resulted model would be an undirected version of factor analysis and however, it turns out to be identical to the traditional directed version \cite{marks2001diffusion}. 
If we only change the binary hidden variable of Bernoulli RBM to become following Gaussian distribution, it means that the induced model would be a special case of exponential PCA \cite{mohamed2009bayesian}. 
 
The parameters of energy function include an extra covariance matrix of observed variable, and if $C$ is set to be identity matrix, it will recover to the GRBM defined in \cite{murphy2012machine}. 
Identity matrix is so simple that nothing to learn, thus diagonal covariance matrix \cite{cho2013gaussian} is usually preferred in practice due to its efficient computation. 
More complicated covariance matrix design (e.g., \cite{hinton2010modeling} parametrizes $C$ as a structured factorization) exists as well but beyonds our research area.
 
Since $\mathbf{v}$ is a real-valued vector, the summation in partition function $Z(\bm\theta)$ (Eq. (\ref{eq:binary_Z})) over $\mathbf{v}$ is replaced by integration, thus the log-partition function can be written as
\begin{align} \label{eq:logpf}
f(\bm\theta) = \log\left(\int\sum_{\mathbf{h}}\exp\left( - E_G(\mathbf{v},\mathbf{h};\bm\theta)\right)\mathrm{d}\mathbf{v}\right)
\end{align}
While the definitions of $p(\mathbf{v};\bm\theta)$, $\mathcal{L}(\theta)$ and data negative likelihood function $g(\bm\theta)$ will remain the same formula as Eq. (\ref{eq:pv}, \ref{eq:loss}) in previous section, although their internal computation is completely different. 
According to energy function (\ref{eq:grbm}), the conditional distribution can be represented as,
\begin{align*}
p(\mathbf{v}|\mathbf{h},\bm\theta)&=\mathcal{N}\left(\mathbf{b}+W\mathbf{h},C\right) \\
p(h_k=1|\mathbf{v},\bm\theta)&=\text{Bern}\left( \sigma\left([\mathbf{v}^\top C^{-1} W+\mathbf{a}^\top]_k\right) \right)
\end{align*}
where $[\cdot]_k$ means the $k$-th element of vector. 
The tractable conditional distributions enable the possibility of Gibbs sampling for gradient computation (see supplementary materials for the detailed gradient derivation), and it means that CD-$k$ algorithm for gradients estimation can be applied to GRBM directly. 
However, we want to develop more efficient usage of gradients for parameter updates in the framework of SSD. 
It essentially requires that we can upper bound the loss function in the sense of $l_\infty$ or $S_\infty$ norm. 
Our method is to bound $f(\bm\theta)$ and $g(\bm\theta)$ separately. 

\subsection{Upper Bound of $f(\bm\theta)$}

In this section, we will bound the log-partition function w.r.t. each of $\bm\theta=\{W, \mathbf{b}, \mathbf{a}, C^{-1}\}$ when the other three are fixed or $\bm\theta^{old}$ is given. First, we handle the simplest cases for $\mathbf{a}$. By integrating out continuous variable $\mathbf{v}$, we can rewrite $f(\bm\theta)$ as 
$
f(\bm\theta) = \log\left(\sum_{i=1}^{2^{N_h}}\omega_i\exp(\mathbf{h}_i^\top\mathbf{a}) \right)
$, 
where $\omega_i = (2\pi)^{\frac{N_v}{2}}|C|^{\frac{1}{2}}\exp\left(\mathbf{b}^\top C^{-1} W\mathbf{h}_i+ \frac{(W\mathbf{h}_i)^\top C^{-1} W\mathbf{h}_i}{2} \right)$. 
Notice the bianary vector $\mathbf{h}$ can have $2^{N_h}$ configurations. 
To bound $f(\bm\theta)$ w.r.t. $\mathbf{a}$, we require the following Lemma and use it to obtain the upper bound shown in Theorem \ref{thm:bound_a} (See proof in supplementary materials).
\begin{lem}\label{lem:lse} \cite{carlson2015stochastic} 
Consider log-sum-exp function $lse_{\bm\omega}(\mathbf{x})=\log\sum_{i}\omega_i\exp(x_i)$ where $\bm\omega$ is independent of $\mathbf{x}$, the following inequality $lse_{\bm\omega}(\mathbf{x}+\Delta\mathbf{x}) \leq lse_{\bm\omega}(\mathbf{x}) + \langle \nabla lse_{\bm\omega}(\mathbf{x}), \Delta\mathbf{x}\rangle + \frac{1}{2}\|\Delta\mathbf{x}\|_\infty $ holds. 
\end{lem}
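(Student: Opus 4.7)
}
The plan is to reduce the claim to a second-order Taylor expansion of $lse_{\bm\omega}$ with an explicit bound on its Hessian in the operator norm dual to $\ell_\infty$. Throughout I will assume the natural positivity $\omega_i>0$ (which is the setting of the paper, where each $\omega_i$ is an exponential). Note that the right-hand side should be $\tfrac12\|\Delta\mathbf{x}\|_\infty^2$ rather than $\tfrac12\|\Delta\mathbf{x}\|_\infty$; the missing square is a typo, and what follows proves the squared version.

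First I would compute the gradient and Hessian of $lse_{\bm\omega}(\mathbf{x})=\log\sum_i\omega_i e^{x_i}$ by straightforward differentiation. Setting
\begin{equation*}
p_i(\mathbf{x}) \;=\; \frac{\omega_i e^{x_i}}{\sum_j \omega_j e^{x_j}},
\end{equation*}
one finds $\nabla lse_{\bm\omega}(\mathbf{x})=\mathbf{p}(\mathbf{x})$ and $\nabla^2 lse_{\bm\omega}(\mathbf{x})=\mathrm{diag}(\mathbf{p}(\mathbf{x}))-\mathbf{p}(\mathbf{x})\mathbf{p}(\mathbf{x})^\top$. Crucially, $\mathbf{p}(\mathbf{x})$ is a bona fide probability vector (non-negative entries summing to one) regardless of $\bm\omega>0$, so the Hessian is exactly the covariance matrix of the discrete random variable $U$ that takes value $u_i$ with probability $p_i(\mathbf{x})$.

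Next I would bound the quadratic form. For any direction $\mathbf{u}\in\mathbb{R}^n$,
\begin{equation*}
\mathbf{u}^\top \nabla^2 lse_{\bm\omega}(\mathbf{x})\,\mathbf{u} \;=\; \mathrm{Var}_{\mathbf{p}(\mathbf{x})}(U) \;\leq\; \frac{(\max_i u_i - \min_i u_i)^2}{4} \;\leq\; \|\mathbf{u}\|_\infty^2,
\end{equation*}
where the first inequality is Popoviciu's inequality for a variance supported on $[\min_i u_i,\max_i u_i]$ and the second uses $\max_i u_i-\min_i u_i\leq 2\|\mathbf{u}\|_\infty$. Hence the Hessian is uniformly bounded (over all $\mathbf{x}$) in the operator norm induced by $\|\cdot\|_\infty$.

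Finally I would close the argument with the integral form of Taylor's theorem:
\begin{equation*}
lse_{\bm\omega}(\mathbf{x}+\Delta\mathbf{x}) \;=\; lse_{\bm\omega}(\mathbf{x}) + \langle \nabla lse_{\bm\omega}(\mathbf{x}),\Delta\mathbf{x}\rangle + \int_0^1 (1-t)\,\Delta\mathbf{x}^\top \nabla^2 lse_{\bm\omega}(\mathbf{x}+t\Delta\mathbf{x})\,\Delta\mathbf{x}\,\mathrm{d}t,
\end{equation*}
and upper-bound the integrand pointwise by $\|\Delta\mathbf{x}\|_\infty^2$ using the previous paragraph, which yields the remainder bound $\tfrac12\|\Delta\mathbf{x}\|_\infty^2$. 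There is no real obstacle here; the only step requiring care is the Popoviciu bound with the correct constant, since a loose application (e.g.\ bounding $\mathrm{Var}(U)\leq \mathbb{E}[U^2]\leq \|\mathbf{u}\|_\infty^2$) already suffices for the lemma as stated and sidesteps Popoviciu altogether if one prefers.
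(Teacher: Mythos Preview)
Your proposal is correct. The paper itself does not give a proof of this lemma; its ``proof'' is simply a pointer to \cite{carlson2015stochastic}. What you wrote is the standard argument (gradient/Hessian of the weighted log-sum-exp, variance bound on the quadratic form, integral Taylor remainder), and it is exactly the kind of derivation that reference contains. Your observation about the missing square on $\|\Delta\mathbf{x}\|_\infty$ is also right; the paper's subsequent uses of the lemma (e.g.\ in the proof of Theorem~\ref{thm:bound_a}, where $\|\mathbf{H}^\top\Delta\mathbf{a}\|_\infty^2$ appears) confirm that the intended remainder is $\tfrac{1}{2}\|\Delta\mathbf{x}\|_\infty^2$.
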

\begin{thm}\label{thm:bound_a} When fixing other parameters, we have
\begin{align}
f(\mathbf{a}+\Delta\mathbf{a}) &\leq f(\mathbf{a}) + \langle \nabla_{\mathbf{a}} f(\mathbf{a}), \Delta\mathbf{a}\rangle + \frac{N_h}{2}\|\Delta\mathbf{a}\|_\infty^2  \label{eq:bound_a}
\end{align}
\end{thm}

In order to bound $f(\bm\theta)$ of w.r.t. other parameters, we make an assumption that the weight matrix $W$ has a finite norm, i.e., $\|W\|_2 \leq R$. 
This assumption is theoretically reasonable and practical in most deep learning implementation ($R$ is usually small, e.g.  $R=\sqrt{15}$ in \cite{hinton2012improving}), since either the weights decay or weights renormalization is to force the weights matrix norm to decrease. 
Thus we can readily bound $f(\bm\theta)$ w.r.t. $\mathbf{b}$ by applying Lemma \ref{lem:lse}. We rewrite $f$ as $
f(\bm\theta) = \log\left(\sum_{i=1}^{2^{N_h}}\omega_i\exp(\mathbf{b}^\top C^{-1}W\mathbf{h}_i) \right)
$, 
where $\omega_i=(2\pi)^{\frac{N_v}{2}}|C|^{\frac{1}{2}}\exp\left(\mathbf{h}_i^\top\mathbf{a}+ \frac{1}{2}(W\mathbf{h}_i)^\top C^{-1}W\mathbf{h}_i \right)$
\begin{thm}\label{thm:bound_b} When fixing other parameters, we have
\begin{align}
\hspace{-1cm}
f(\mathbf{b}+\Delta\mathbf{b}) &\leq f(\mathbf{b}) + \langle \nabla_{\mathbf{b}} f(\mathbf{b}), \Delta\mathbf{b}\rangle + \frac{N_hRr(C)}{2}\|\Delta\mathbf{b}\|_\infty^2, \label{eq:bound_b}
\end{align}
where $r(C)$ is a local constant dependent on the parameter in previous iteration.
\end{thm}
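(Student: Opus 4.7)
The plan is to reduce Theorem \ref{thm:bound_b} to a direct application of Lemma \ref{lem:lse} through the affine reparameterization already suggested by the restatement $f(\bm\theta) = \log\sum_{i=1}^{2^{N_h}}\omega_i\exp(\mathbf{b}^\top C^{-1}W\mathbf{h}_i)$. Since the weights $\omega_i$ are independent of $\mathbf{b}$, I can view $f$ as $lse_{\bm\omega}(\mathbf{x})$ where the $i$-th coordinate $x_i = \mathbf{b}^\top C^{-1}W\mathbf{h}_i$ is a linear function of $\mathbf{b}$.

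First I would set $\Delta x_i = \Delta\mathbf{b}^\top C^{-1}W\mathbf{h}_i$ and invoke Lemma \ref{lem:lse} (with the standard squared $\ell_\infty$ majorizer) to obtain
\begin{align*}
f(\mathbf{b}+\Delta\mathbf{b}) \leq f(\mathbf{b}) + \langle\nabla_{\mathbf{x}} lse_{\bm\omega}(\mathbf{x}),\Delta\mathbf{x}\rangle + \tfrac{1}{2}\|\Delta\mathbf{x}\|_\infty^2.
\end{align*}
By the chain rule, the middle term equals $\langle\nabla_{\mathbf{b}}f(\mathbf{b}),\Delta\mathbf{b}\rangle$, so the only remaining task is to establish an inequality of the form $\|\Delta\mathbf{x}\|_\infty^2 \leq N_h R\, r(C)\,\|\Delta\mathbf{b}\|_\infty^2$.

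For the norm inequality I would bound a single coordinate by Cauchy--Schwarz together with spectral submultiplicativity: $|\Delta x_i| = |\langle\Delta\mathbf{b},C^{-1}W\mathbf{h}_i\rangle| \leq \|\Delta\mathbf{b}\|_2\,\|C^{-1}\|_2\,\|W\|_2\,\|\mathbf{h}_i\|_2$. Using $\|\mathbf{h}_i\|_2^2\leq N_h$ (binary $\mathbf{h}_i$), the assumption $\|W\|_2\leq R$, and the standard inequality $\|\Delta\mathbf{b}\|_2^2 \leq N_v\|\Delta\mathbf{b}\|_\infty^2$, I get $\|\Delta\mathbf{x}\|_\infty^2 \leq N_h R \cdot (N_v R\|C^{-1}\|_2^2)\,\|\Delta\mathbf{b}\|_\infty^2$. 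Defining $r(C) = N_v R\|C^{-1}\|_2^2$, evaluated at $\bm\theta^{old}$, then recovers the stated coefficient.

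The main obstacle is bookkeeping rather than any deep inequality: the $\ell_2$-to-$\ell_\infty$ conversion on $\Delta\mathbf{b}$ and the bound on $\|C^{-1}W\mathbf{h}_i\|_2$ each introduce constants that have to be packaged so that $N_h$ and $R$ appear explicitly while the dependence on $C$ (and the incidental $N_v$ and residual $R$) is absorbed into $r(C)$. One must therefore commit to one Hölder chain and interpret $r(C)$ as the residual constant that emerges; an alternative route via $\|\cdot\|_1 \leq \sqrt{N_v}\|\cdot\|_2$ and $\ell_\infty$--$\ell_1$ Hölder yields the same asymptotic bound with only cosmetic differences in how the factors are distributed between the explicit prefactor and $r(C)$.
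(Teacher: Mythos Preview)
Your proposal is correct and follows essentially the same route as the paper: apply Lemma~\ref{lem:lse} to the affine reparameterization $x_i=\mathbf{b}^\top C^{-1}W\mathbf{h}_i$, identify the linear term via the chain rule, and then bound $\|\Delta\mathbf{x}\|_\infty^2$ by a constant times $\|\Delta\mathbf{b}\|_\infty^2$.

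The only difference is in how the final norm chain is arranged. The paper first groups $\Delta\mathbf{b}^\top C^{-1}$ as a single vector, bounds $\max_i|\Delta\mathbf{b}^\top C^{-1}W\mathbf{h}_i|^2$ in terms of $N_h$, $R$, and $\|C^{-1}\Delta\mathbf{b}\|_\infty^2$, and then passes from $\|C^{-1}\Delta\mathbf{b}\|_\infty$ to $\|\Delta\mathbf{b}\|_\infty$ via the eigenvalues of $C$; this keeps $r(C)$ a function of $C$ alone (the main text states that $1/r(C)$ lies between the extreme eigenvalues of $C$, with $r(C)=1/\lambda_{\min}(C)$ a valid choice). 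Your Cauchy--Schwarz chain instead routes through $\|\Delta\mathbf{b}\|_2$, picking up an extra $N_vR$ that you then fold into $r(C)$. Both are valid, but the paper's packaging matches the stated interpretation of $r(C)$ more literally, whereas yours makes $r(C)$ depend on quantities other than $C$.
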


$r(C)$ actually depends on the eigenvalues of $C$, where $1/r(C)$ is between the minimum and maximum of the eigenvalues (See discussion in Supplementary materials). 
For inverse covariance matrix $C^{-1}$, we can derive the following representation by integrating out $\mathbf{v}$.
\begin{align*}
f(\bm\theta) =& \log\left(\sum_{i=1}^{2^{N_h}}\omega_i\exp\left( \left(\mathbf{b}+\frac{1}{2}(W\mathbf{h}_i)\right)^\top C^{-1}W\mathbf{h}_i \right) \right) + \frac{1}{2}\log|C|
\end{align*}
where $\omega_i=(2\pi)^{\frac{N_v}{2}}\exp\left(\mathbf{h}_i^\top\mathbf{a}\right)$. 
However, this function involves log determinant, we need another lemma.
\begin{prop}\label{prop:logdet} 
$-\log|C^{-1}|=\log|C|$.
\end{prop}
Identity in Proposition \ref{prop:logdet} allows to consider $f(\bm\theta)$ w.r.t. $C^{-1}$. Lamma \ref{lem:lse} with assumption on $W$ indicates the upper bound for $f$.
\begin{thm}\label{thm:bound_C} Given parameters in previous iteration, $f(C^{-1}+U) \leq f(C^{-1}) + \langle \nabla_{C^{-1}} f(C^{-1}), U\rangle + \left(\frac{N_h^2R^2}{2}+r(C)\right)\|U\|_{S_\infty}^2 $, 
where $U$ is positive semidefinite, inner product is defined as trace for matrix, and $r(C)$ is a local constant. 
\end{thm}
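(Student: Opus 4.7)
The plan is to decompose $f$ using Proposition \ref{prop:logdet} into a log-sum-exp piece $f_1$ and a log-determinant piece $f_2(C^{-1}) = -\tfrac{1}{2}\log|C^{-1}|$, bound each perturbation separately in $\|\cdot\|_{S_\infty}$, and combine using linearity of the gradient.

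For $f_1$, the exponent $x_i(C^{-1}) = (\mathbf{b}+\tfrac{1}{2}W\mathbf{h}_i)^\top C^{-1}W\mathbf{h}_i$ is \emph{linear} in $C^{-1}$, so under $C^{-1}\to C^{-1}+U$ the increment is $\Delta x_i = \mathrm{tr}\bigl(W\mathbf{h}_i(\mathbf{b}+\tfrac{1}{2}W\mathbf{h}_i)^\top U\bigr)$. I would bound $|\Delta x_i|$ via the trace--Schatten duality $|\mathrm{tr}(AB)|\leq \|A\|_{S_1}\|B\|_{S_\infty}$ applied to the rank-one $A = W\mathbf{h}_i(\mathbf{b}+\tfrac{1}{2}W\mathbf{h}_i)^\top$, whose $S_1$-norm equals $\|W\mathbf{h}_i\|_2\,\|\mathbf{b}+\tfrac{1}{2}W\mathbf{h}_i\|_2$, and then invoke $\|W\|_2\leq R$ together with $\|\mathbf{h}_i\|_2\leq \sqrt{N_h}$ to bound each factor. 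Squaring the resulting estimate on $\|\Delta\mathbf{x}\|_\infty$ and inserting into Lemma \ref{lem:lse} delivers the $\tfrac{N_h^2 R^2}{2}\|U\|_{S_\infty}^2$ term; the lower-order cross-contributions coming from $\|\mathbf{b}\|_2$ get absorbed into $r(C)$.

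For $f_2$, since $X\mapsto -\log|X|$ is convex on the PSD cone, the usual convexity inequality yields a lower bound, not an upper one, so the key tool is the exact second-order Taylor expansion with integral remainder
\begin{equation*}
f_2(X+U) - f_2(X) - \langle \nabla f_2(X),U\rangle = \tfrac{1}{2}\int_0^1 (1-t)\,\mathrm{tr}\bigl(((X+tU)^{-1}U)^2\bigr)\,dt.
\end{equation*}
Here the PSD assumption on $U$ is crucial: it gives $X+tU \succeq X = C^{-1}$ for all $t\in[0,1]$, so operator monotonicity of the matrix inverse yields $(X+tU)^{-1}\preceq C$ and in particular $\|(X+tU)^{-1}\|_{S_\infty}\leq \|C\|_{S_\infty}$. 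Bounding the integrand uniformly in $t$ by a constant multiple of $\|C\|_{S_\infty}^2\|U\|_{S_\infty}^2$ and integrating produces the promised term $r(C)\|U\|_{S_\infty}^2$, with $r(C)$ depending only on the spectrum of $C$ (consistent with the description following Theorem \ref{thm:bound_b}).

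The main obstacle is the log-determinant piece. Unlike the $\mathbf{a}$, $\mathbf{b}$ bounds, which are immediate applications of Lemma \ref{lem:lse} once $\Delta x_i$ is controlled linearly in the perturbation, here one must simultaneously (i) use an integral remainder rather than the direct convexity bound, because we seek an \emph{upper} bound on a convex function, and (ii) exploit the PSD assumption on $U$ to guarantee both invertibility of $X+tU$ along the entire segment and the operator inequality that makes the integrand uniformly bounded. Matching the resulting explicit constant to the stated $r(C)$ is then bookkeeping.
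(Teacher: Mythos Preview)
Your proposal is correct and follows essentially the same route as the paper: decompose $f$ via Proposition \ref{prop:logdet} into the log-sum-exp piece (linear in $C^{-1}$) and $-\tfrac{1}{2}\log|C^{-1}|$, apply Lemma \ref{lem:lse} to the former, and control the latter through the integral Taylor remainder. Your handling of the log-determinant term via operator monotonicity of the inverse (using the PSD assumption on $U$) is actually more explicit than the paper's, which simply asserts the existence of a bound $A\leq r'(C)\|U\|_F^2$ on the remainder and then passes to $\|U\|_{S_\infty}^2$ via the crude inequality $\|U\|_F^2\leq N_v^2\|U\|_{S_\infty}^2$; the paper also leaves the $f_1$ bound (and in particular the constant $\tfrac{N_h^2R^2}{2}$) entirely implicit.
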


For weight matrix $W$, we notice that the log-partition function has a canonical form that does not fall into $lse$ function, where $W$ appears as quadratic.
\begin{align*}
f(\bm\theta)=\log\left(\sum_{i=1}^{2^{N_h}}\omega_i\exp\left(\frac{1}{2} s(W)^\top s(W) \right) \right)
\end{align*}
where $\omega_i=(2\pi)^{\frac{N_v}{2}}|C|^{\frac{1}{2}}\exp\left(\mathbf{h}_i^\top\mathbf{a}-\frac{1}{2}\mathbf{b}^\top C^{-1}\mathbf{b}\right)$ and $s(W) = C^{-1/2} (W\mathbf{h}_i+\mathbf{b})$. 
Thus, we prove another lemma.
\begin{lem}\label{lem:lse2} 
Consider log-sum-exp-square function $lse2_{\bm\omega}(\mathbf{x})=\log\sum_{i}\omega_i\exp(x_i^2/2)$ where $\bm\omega$ is independent of $\mathbf{x}$, if the domain of this function satisfies the bound condition $\|\mathbf{x}\|_2\leq r$, then the following inequality $lse2_{\bm\omega}(\mathbf{x}+\Delta\mathbf{x}) \leq lse2_{\bm\omega}(\mathbf{x}) + \langle \nabla lse2_{\bm\omega}(\mathbf{x}), \Delta\mathbf{x}\rangle + \left(\frac{1}{2}+\frac{3r^2}{4} \right)\|\Delta\mathbf{x}\|_\infty $ holds. 
\end{lem}
With the definition $lse2$, we can apply Lemma \ref{lem:lse2} to bound $f(\bm\theta)$ w.r.t. $W$ in the sense of Schatten-$\infty$ norm.
\begin{thm}\label{thm:bound_W} When fixing other parameters, we have
\begin{align}
f(W+U) &\leq f(W) + \langle \nabla_W f(W), U\rangle \nonumber\\
&+ \left(\frac{1}{2}+\frac{3r(R,C,\mathbf{b})^2}{4}\right)N_vN_h\|U\|_{S_\infty}^2 \label{eq:bound_W}
\end{align}
where $r(R,C,\mathbf{b})$ depends on the parameter $C,\mathbf{b}$ in previous iteration.
\end{thm}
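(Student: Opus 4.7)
The plan is to express $f(W)$ as an $lse2$-type object and then invoke Lemma~\ref{lem:lse2}, using $\|W\|_2\le R$ to verify its boundedness hypothesis and a pair of standard norm inequalities to convert the resulting $\ell_\infty$ bound on the argument into a Schatten-$\infty$ bound on $U$. Set $s_i(W)=C^{-1/2}(W\mathbf{h}_i+\mathbf{b})$ and $y_i(W)=\|s_i(W)\|_2$ for each of the $2^{N_h}$ binary configurations $\mathbf{h}_i$, and collect the $W$-independent factors into $\omega_i$; the canonical representation displayed just above the theorem then reads $f(W)=lse2_{\bm\omega}(\mathbf{y}(W))$ with $\mathbf{y}\in\mathbb{R}^{2^{N_h}}_+$.

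Since $\|\mathbf{h}_i\|_2\le\sqrt{N_h}$, the hypothesis $\|W\|_2\le R$ gives the uniform bound $y_i\le\|C^{-1/2}\|_2(R\sqrt{N_h}+\|\mathbf{b}\|_2)$, which identifies $r(R,C,\mathbf{b})$ and verifies the hypothesis of Lemma~\ref{lem:lse2}. Applying that lemma at $\mathbf{x}=\mathbf{y}(W)$ with $\Delta\mathbf{x}=\mathbf{y}(W+U)-\mathbf{y}(W)$ produces a bound of the form $f(W+U)\le f(W)+\langle\nabla lse2_{\bm\omega}(\mathbf{y}),\Delta\mathbf{y}\rangle+\bigl(\tfrac{1}{2}+\tfrac{3r^2}{4}\bigr)\|\Delta\mathbf{y}\|_\infty^2$, whose first-order term matches $\langle\nabla_W f(W),U\rangle$ by the chain rule applied to the composition $W\mapsto\mathbf{y}(W)\mapsto lse2_{\bm\omega}$.

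To convert $\|\Delta\mathbf{y}\|_\infty^2$ into $N_vN_h\|U\|_{S_\infty}^2$, the reverse triangle inequality gives $|\Delta y_i|\le\|s_i(W+U)-s_i(W)\|_2=\|C^{-1/2}U\mathbf{h}_i\|_2$. Using $\|Av\|_2\le\|A\|_F\|v\|_2$ with $A=C^{-1/2}$ and $v=U\mathbf{h}_i$, combined with $\|U\mathbf{h}_i\|_2\le\|U\|_{S_\infty}\sqrt{N_h}$ and the standard comparison $\|C^{-1/2}\|_F^2\le N_v\|C^{-1/2}\|_2^2$, yields $|\Delta y_i|^2\le N_vN_h\|C^{-1/2}\|_2^2\|U\|_{S_\infty}^2$; taking the maximum over $i$ and absorbing the $\|C^{-1/2}\|_2$ factor into $r(R,C,\mathbf{b})$ produces exactly the quadratic factor in the theorem.

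The main obstacle will be the chain-rule step: $\mathbf{y}(W)$ is nonlinear and non-smooth (at $s_i=0$) in $W$, so matching $\langle\nabla lse2_{\bm\omega}(\mathbf{y}),\Delta\mathbf{y}\rangle$ with $\langle\nabla_W f(W),U\rangle$ is not immediate, and the nonlinear discrepancy produces a second-order remainder that must be absorbed into the quadratic term. The cleanest workaround is to forgo the direct application of Lemma~\ref{lem:lse2} in favor of Taylor's theorem applied to $\varphi(t)=f(W+tU)$ on $t\in[0,1]$, and to bound $\sup_t|\varphi''(t)|$ via the usual variance-plus-curvature decomposition of the Hessian of a log-sum-exp, using the same spectral estimates as above. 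This bypasses the non-smoothness and makes the $N_vN_h\bigl(\tfrac{1}{2}+\tfrac{3r^2}{4}\bigr)$ constant appear transparently from the two norm inequalities already used.
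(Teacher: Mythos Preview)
Your plan mirrors the paper's own argument: rewrite $f$ in the $lse2$ form with $s_i(W)=C^{-1/2}(W\mathbf{h}_i+\mathbf{b})$ and invoke Lemma~\ref{lem:lse2}. The paper's proof of this theorem is in fact terser than your proposal---it says only ``Follow Lemma~\ref{lem:lse2} and previous proof can conclude the result''---so you have identified real work the paper leaves implicit.

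The obstacle you flag is genuine and is glossed over in the paper as well: with $y_i=\|s_i(W)\|_2$ the map $W\mapsto\mathbf{y}$ is nonlinear, so Lemma~\ref{lem:lse2} applied at $\mathbf{y}(W)$ with increment $\mathbf{y}(W+U)-\mathbf{y}(W)$ does \emph{not} directly produce the linear term $\langle\nabla_W f,U\rangle$. Your Taylor-on-$\varphi(t)=f(W+tU)$ workaround is the clean fix; it amounts to redoing the Hessian estimate from the lemma's proof with the \emph{affine} parametrization $t\mapsto s_i(W+tU)$ kept intact, so no norm ever appears as an intermediate variable and the chain rule is trivial.

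One small discrepancy to watch: Lemma~\ref{lem:lse2} as stated assumes $\|\mathbf{x}\|_2\le r$, but your uniform bound on each $y_i$ only controls $\|\mathbf{y}\|_\infty$, and with $2^{N_h}$ coordinates that gives no useful $\ell_2$ bound. Fortunately the second calculation in the lemma's own proof actually uses $\|\mathbf{x}\|_\infty$ throughout, and your direct-Taylor route needs only $\max_i\|s_i(W+tU)\|_2$, so this is a cosmetic looseness in the lemma's hypothesis rather than a hole in your argument. Your norm conversion $\|C^{-1/2}U\mathbf{h}_i\|_2^2\le N_vN_h\|C^{-1/2}\|_2^2\|U\|_{S_\infty}^2$ (with the $\|C^{-1/2}\|_2^2$ absorbed into $r$) is correct and matches the constant in the theorem.
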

Notice that in practice we don't care about each constant before all error terms in the sense of infinity norm. 
Since we always need to adjust the learning rate, the learning rate will absorb these constants and it is unnecessary to compute it.

\subsection{Upper Bound of $g(\bm\theta)$}

In previous section, we derive the upper bound for log-partition function. 
We can also rewrite negative likelihood function $g(\bm\theta)$ as
\begin{align*}
g(\bm\theta) =& \frac{1}{N}\sum_{n=1}^N \left(\frac{1}{2}(\mathbf{v}_n-\mathbf{b})^\top C^{-1}(\mathbf{v}_n-\mathbf{b}) \right. \\
 & \left. -\sum_{k=1}^{N_h}\log\left( 1+\exp\left([\mathbf{v}_n^\top C^{-1} W + \mathbf{a}]_k\right)\right)\right) .
\end{align*}
However, we still need to derive the upper bound for $g$, if we want to bound the loss function $\mathcal{L}(\bm\theta)$. 
Notice that due to the property of negative softplus function $-\log(1+e^x)$, $g(\bm\theta)$ is concave w.r.t. $W$ or $\mathbf{a}$, the inequality $g(\mathbf{a}+\Delta\mathbf{a}) \leq g(\mathbf{a}) + \langle \nabla_{\mathbf{a}} g(\mathbf{a}), \Delta\mathbf{a} \rangle$ naturally holds, so does for $W$. 
In fact, $g(\bm\theta)$ is also concave w.r.t. $C^{-1}$, since the first part is linear to $C^{-1}$ (See details in supplementary materials).

As to $\mathbf{b}$, we notice $g$ is quadratic where the gradient that exceeds 2nd order is vainished, i.e., $g(\mathbf{b}+\Delta\mathbf{b}) - g(\mathbf{b}) = \langle \nabla_\mathbf{b} g(\mathbf{b}), \Delta\mathbf{b}\rangle + \frac{1}{2}\Delta\mathbf{b}^\top C^{-1}\Delta\mathbf{b}$. In addition, $\Delta\mathbf{b}^\top C^{-1}\Delta\mathbf{b} \leq N_v r(C)\|\Delta\mathbf{b}\|_\infty$, where $r(C)$ can be determined by the smallest eigenvalue of $C$ (Details see supplementary materials).

\subsection{Revisiting Stochastic Spectral Descent}

To this end, we have successfully bounded $f(\bm\theta)$ and $g(\bm\theta)$ w.r.t. each parameter in $\bm\theta$. 
Thus, the final loss function $\mathcal{L}(\bm\theta)$ can also be bounded in the sense of infinity norm.
\begin{align*}
\mathcal{L}(\bm\theta_i+\Delta\bm\theta_i) \leq \mathcal{L}(\bm\theta_i) + \langle \nabla_{\bm\theta_i} \mathcal{L}(\bm\theta_i), \Delta\bm\theta_i\rangle + c_i\|\Delta\bm\theta_i\|_\infty^2
\end{align*}
The SSD update strategy is derived from minimizing the surrogate of loss function from the right-hand side of above inequality. 
Since the Schattern-$\infty$ norm mainly depends on the largest eigenvalue, the optimal direction to perform gradient descent is different from traditional stochastic gradient descent (SGD). 
However, if the error term is in $l_2$ norm, the update strategy will recover SGD update. 
Additionally, it indicates the inequality holds for both Bernoulli and Gaussian RBM, and guarantees that SSD can be applied. 
Thus, the general SSD algorithm is summarized in Algorithm \ref{alg:ssd}.
\begin{algorithm}[H]
\caption{Stochastic Spectral Descent for RBM}
\label{alg:ssd}
\begin{algorithmic}[1]
\WHILE{$\bm\theta$ Not Converge}
	\STATE Sample minibatch $\mathbf{v}_{1:B}$;
	\STATE $\{\nabla_W, \nabla_\mathbf{a}, \nabla_\mathbf{b}, \nabla_{C^{-1}}\}\mathcal{L}$ estimated by CD-$k$;
	\STATE $S_\infty$ Update for $W$;
	\STATE $l_\infty$ Update for $\mathbf{a},\mathbf{b}$;
	\STATE Gaussian RBM only: $l_\infty$ Update for $C^{-1}$ if $C^{-1}$ is diagonal else $S_\infty$ Update;
\ENDWHILE
\end{algorithmic}
\end{algorithm} 
\vspace{-10mm}
\begin{minipage}[t]{4cm}
 \begin{algorithm}[H]
    \caption{$S_\infty$ Update}
    \label{alg:SU}
    \textbf{Input} Matrix $X, \nabla_X$ \\
    $[U, \bm\lambda, V]=\text{SVD}(\nabla_X)$ \\
    $X \leftarrow X - \epsilon \|\bm\lambda\|_1 UV^\top$
  \end{algorithm}
\end{minipage}%
\begin{minipage}[t]{4cm}
 \begin{algorithm}[H]
    \caption{$l_\infty$ Update}
    \label{alg:lU}
    \textbf{Input} Vector $\mathbf{x}, \nabla_\mathbf{x}$ \\
    \vspace*{0.56mm}
    $\Delta\mathbf{x} = \|\nabla_\mathbf{x}\|_1 \text{sign}(\nabla_\mathbf{x})$ \\
    $\mathbf{x} \leftarrow \mathbf{x} - \epsilon \Delta\mathbf{x}$
  \end{algorithm}
\end{minipage}
\vspace*{1mm}  

\begin{thm}\label{thm:SSD}
Optimizing the local approximated loss function
$\arg\min_{\bm\theta_i}\langle \nabla_{\bm\theta_i}\mathcal{L}(\bm\theta_i^{old}), \bm\theta_i - \bm\theta_i^{old} \rangle + c_i\|\bm\theta_i - \bm\theta_i^{old}\|_\infty^2$ will obtain the updates for parameters in Algorithm \ref{alg:SU}, \ref{alg:lU}.
\end{thm}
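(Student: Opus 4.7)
The plan is to treat the vector subproblems (for $\mathbf{a},\mathbf{b}$, and diagonal $C^{-1}$) and the matrix subproblem (for $W$, and for full $C^{-1}$) in parallel by exploiting norm duality. In each case I will parametrize the candidate step by its infinity-type norm $t\ge 0$, minimize the linear part $\langle \nabla_{\bm\theta_i}\mathcal{L},\Delta\bm\theta_i\rangle$ over the corresponding norm ball of radius $t$ (this reduces to a dual-norm calculation), and then minimize a scalar quadratic in $t$. The constant $c_i$ multiplying the squared norm just sets the scale of the step and can be absorbed into the free step size $\epsilon$, which is why Algorithms \ref{alg:SU} and \ref{alg:lU} do not mention $c_i$ at all.

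For the vector case, write $\Delta=\bm\theta_i-\bm\theta_i^{old}$ and $\mathbf{g}=\nabla_{\bm\theta_i}\mathcal{L}(\bm\theta_i^{old})$. By H\"older's inequality, $\langle \mathbf{g},\Delta\rangle\ge -\|\mathbf{g}\|_1\|\Delta\|_\infty$, with equality when each coordinate satisfies $\Delta_j=-t\,\mathrm{sign}(g_j)$ for $t=\|\Delta\|_\infty$. Substituting, the objective becomes $\phi(t)=-t\|\mathbf{g}\|_1+c_it^2$ for $t\ge 0$, whose unique minimizer is $t^\star=\|\mathbf{g}\|_1/(2c_i)$. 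Therefore $\Delta^\star=-\bigl(\|\mathbf{g}\|_1/(2c_i)\bigr)\mathrm{sign}(\mathbf{g})$, and folding $1/(2c_i)$ into $\epsilon$ recovers precisely Algorithm \ref{alg:lU}.

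For the matrix case, let $\nabla_W\mathcal{L}=\hat U\Sigma \hat V^\top$ be a (thin) SVD with singular value vector $\bm\lambda$. The key ingredient is the Schatten duality $|\mathrm{tr}(A^\top B)|\le \|A\|_{S_1}\|B\|_{S_\infty}$ (von Neumann's trace inequality applied with H\"older on the singular-value vectors, as in \cite{bhatia2013matrix}), with equality attained by $B=-t\,\hat U\hat V^\top$ for any $t\ge 0$, since that choice aligns $B$'s singular vectors with those of $A$ and makes all its singular values equal to $t$. Thus, minimizing $\langle \nabla_W\mathcal{L},U\rangle$ subject to $\|U\|_{S_\infty}=t$ yields value $-t\|\bm\lambda\|_1$, and the outer minimization of $-t\|\bm\lambda\|_1+c_i t^2$ gives $t^\star=\|\bm\lambda\|_1/(2c_i)$. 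The minimizer is $U^\star=-\bigl(\|\bm\lambda\|_1/(2c_i)\bigr)\hat U\hat V^\top$, which after absorbing $1/(2c_i)$ into $\epsilon$ matches Algorithm \ref{alg:SU}. The diagonal $C^{-1}$ case degenerates to the vector instance on the diagonal entries.

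The main obstacle is the equality case of the Schatten duality: one must verify that the candidate $B=-t\hat U\hat V^\top$ actually achieves $\mathrm{tr}(\nabla_W\mathcal{L}^\top B)=-t\|\bm\lambda\|_1$ and that no other matrix on the $S_\infty$-ball of radius $t$ does better. This is immediate by direct computation: $\mathrm{tr}(\hat V\Sigma\hat U^\top\hat U\hat V^\top)=\mathrm{tr}(\Sigma)=\|\bm\lambda\|_1$, while the upper bound $\|\nabla_W\mathcal{L}\|_{S_1}\|B\|_{S_\infty}=\|\bm\lambda\|_1\cdot t$ is the matching tight bound. Once this step is in place the remainder is a one-dimensional calculus argument, and the theorem follows by combining the per-parameter minimizers, since the upper bound of $\mathcal{L}$ established in the previous subsections splits additively across $\bm\theta_i$.
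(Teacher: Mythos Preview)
Your proof is correct and follows essentially the same two-step strategy as the paper: parametrize $\Delta\bm\theta_i$ by its $\ell_\infty$ (resp.\ $S_\infty$) norm $t$, minimize the linear term over the norm ball of radius $t$, and then minimize the resulting scalar quadratic in $t$. Your treatment is in fact more careful than the paper's, since you make the norm duality explicit via H\"older's inequality (vector case) and von Neumann's trace inequality (matrix case), whereas the paper's proof writes $\Delta\bm\theta_i=U\Lambda V^\top$ using the singular vectors of the gradient without justifying why the optimal step must share those singular vectors.
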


\subsubsection{Computation complexity} Another important issue w.r.t. SSD is its efficiency on the singular value decomposition (SVD). 
In most experiments, $C^{-1}$ is usually set as diagonal \cite{murphy2012machine,cho2013gaussian}, thus SVD operation is not necessary. 
For $N_v \times N_h$ weight matrix $W$, SVD requires $\mathcal{O}(N_vN_h\min\{N_v,N_h\})$ algorithmic computation. 
However, this computation cost may be practically equivalent to SGD if $\min\{N_v,N_h\}\approx Bk$, since the gradient computation by CD-$k$ algorithm requires $\mathcal{O}(N_vN_hBk)$ computations, where $B$ is the batch size. 
Alternatively, we can use the randomized SVD algorithm \cite{halko2011finding} for speeding up with low rank approximation, reducing the computational complexity to $\mathcal{O}(NM+N_vN_h\log M)$ where $N=\max\{N_v,N_h\}$ and $M$ is approximated level. 
Meanwhile, the update rule will be modified as $\Delta X=\|\bm\lambda\|_1 UV^\top+\|\bm\lambda\|_1(X-U\bm\lambda V^\top)/\|X-U\bm\lambda V^\top\|_{S_\infty}$. 
This approach is fast in practice and will not significantly harm the performance.

\subsubsection{Discussion} Additionally, if $\mathcal{L}$ is Lipschitz continuous, the convergence rate is $\mathcal{O}\left(L_p\|\bm\theta-\bm\theta^*\|_p^2 / T\right)$, where $\bm\theta^*$ is local optimal and $L_p$ is Lipschitz smoothness constant w.r.t. $p$ norm. 
This implies that SSD does not substantially increase the computational cost but may potentially encourage faster convergence than SGD (correspond to $L_2\|\bm\theta-\bm\theta^*\|_2^2$) if $L_\infty\|\bm\theta-\bm\theta^*\|_\infty^2$ is smaller. 
\cite{kelner2014almost} proved the conditions that optimization method favors infinity norm, however, it is difficult to verify for RBM with such a complicated non-convex loss function. 
This is beyond our research and we will leave it as future work.

Unlike the trend of SGD based algorithms, by applying Adagrad \cite{duchi2011adaptive}, Rmsprop \cite{tieleman2012lecture}, AdaDelta \cite{zeiler2012adadelta}, Adam \cite{kingma2014adam} on top of SGD, which are universally suitable for most deep learning problems, SSD is specifically designed for RBM. 
Additionally, SSD does not focus on how to adapt the learning rate or step-size by leveraging the gradients of previous iterations, but is basically trying to adjust the descent direction by taking advantage of geometry information of current gradient. 
However, for Gaussian RBM we notice that the covariance matrix $C^{-1/2}$ always plays the role of preconditioner in gradients derivation. 
This means we can naturally adjust or replace $C^{-1/2}$ with other preconditioner; more efficiently, we can directly apply most of the preconditioners, like Adagrad, Rmsprop, and Adam, on top of SSD. 
In summary, SSD is \textbf{parallel} to SGD.

\begin{figure*}[t]
\subfigure[BRBM Train]{
\includegraphics[width=60mm]{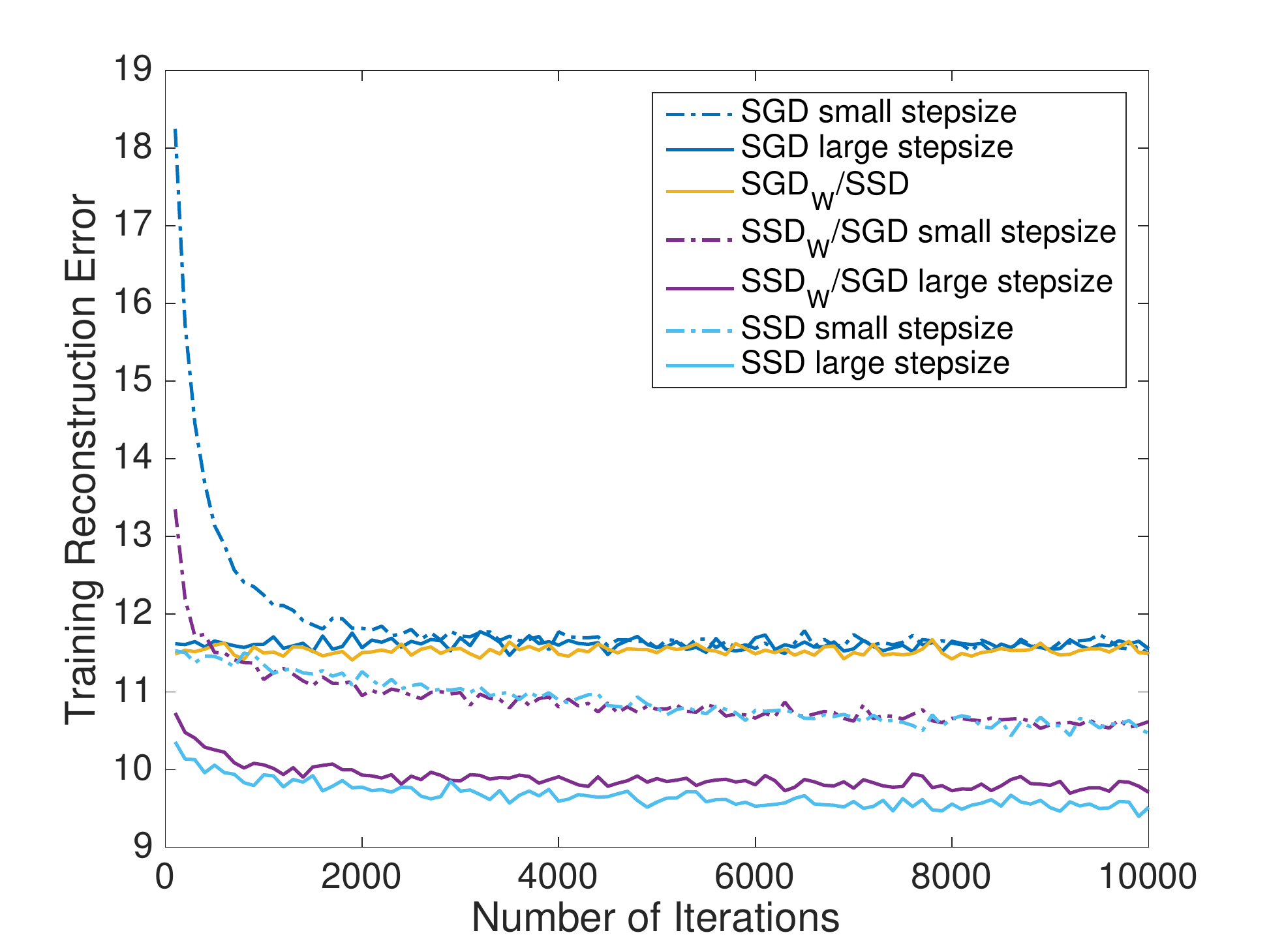}
}
\subfigure[BRBM Test]{
\includegraphics[width=60mm]{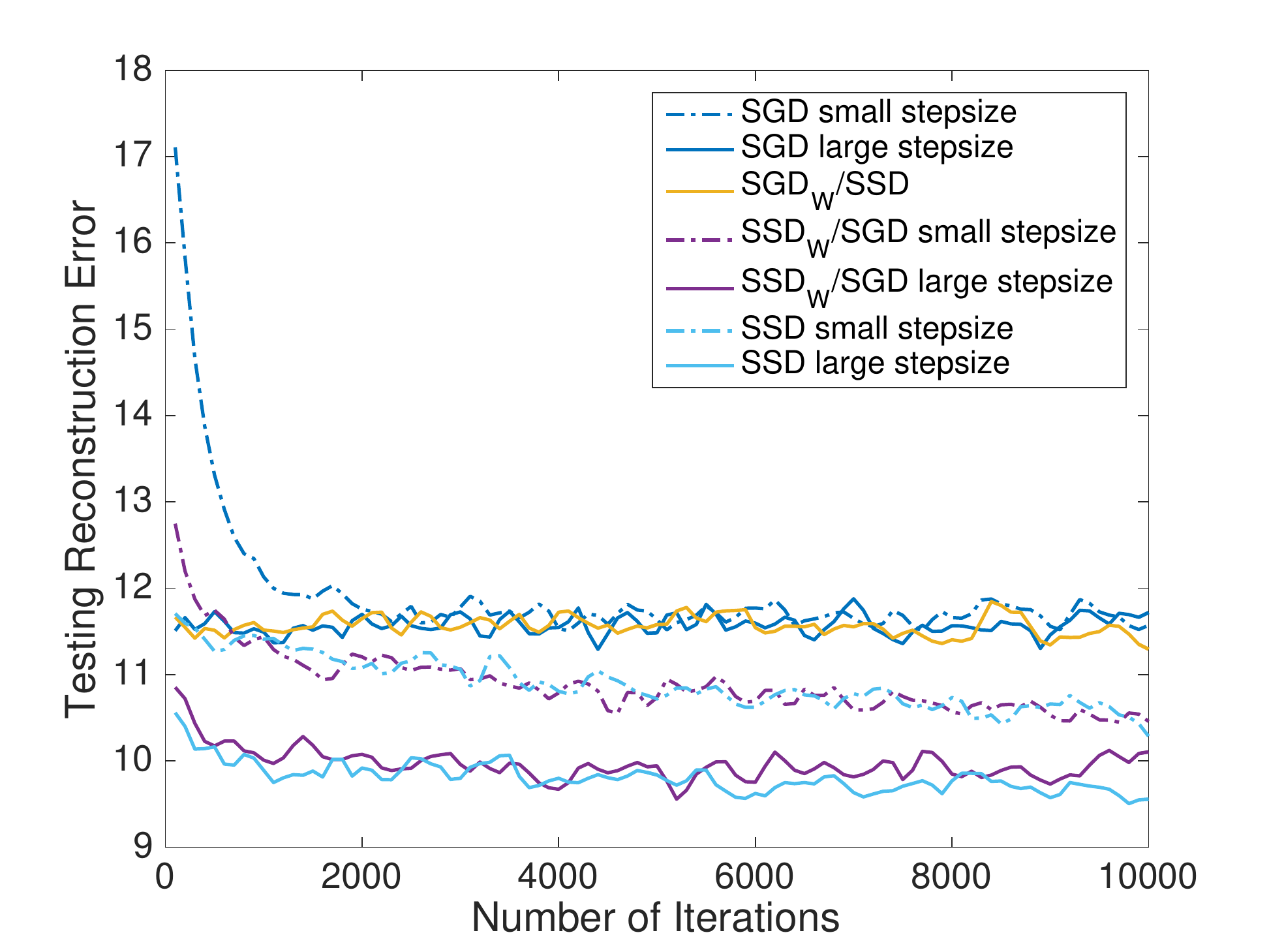}
}
\subfigure[GRBM]{
\includegraphics[width=60mm]{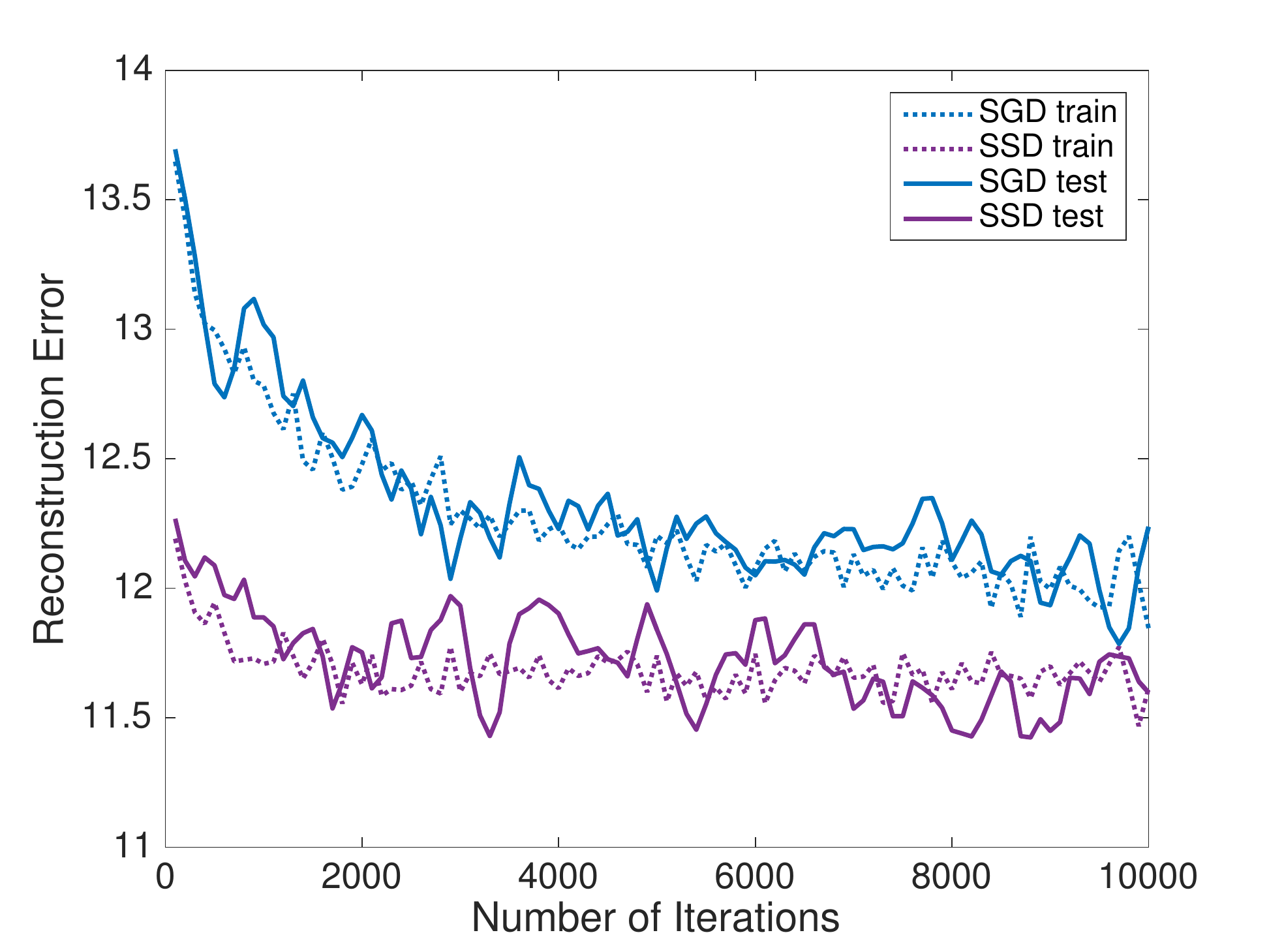}
}
\subfigure[MNIST]{
\includegraphics[width=60mm]{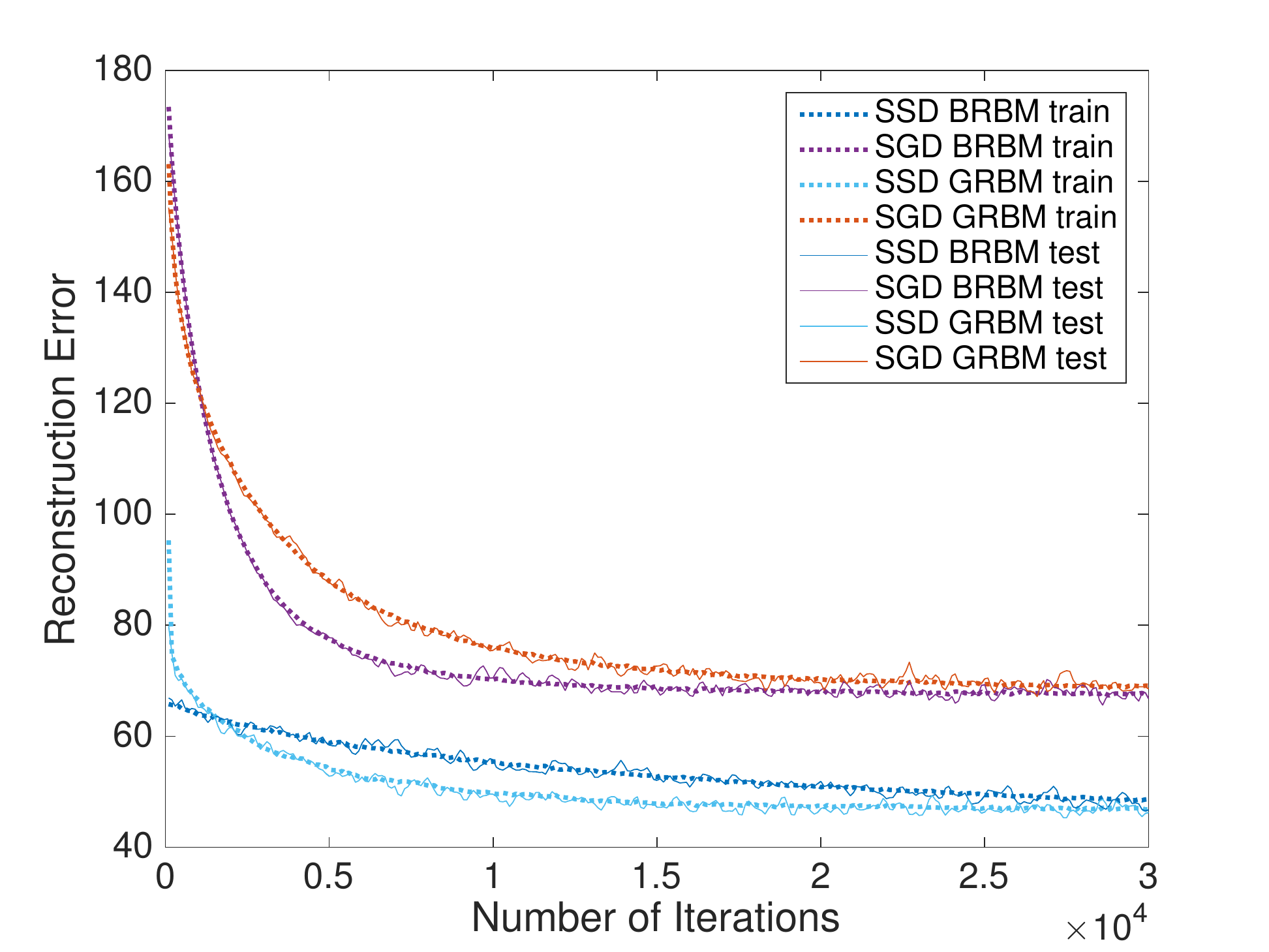}
}
\subfigure[Weights: SGD v.s. SSD]{
\includegraphics[width=60mm]{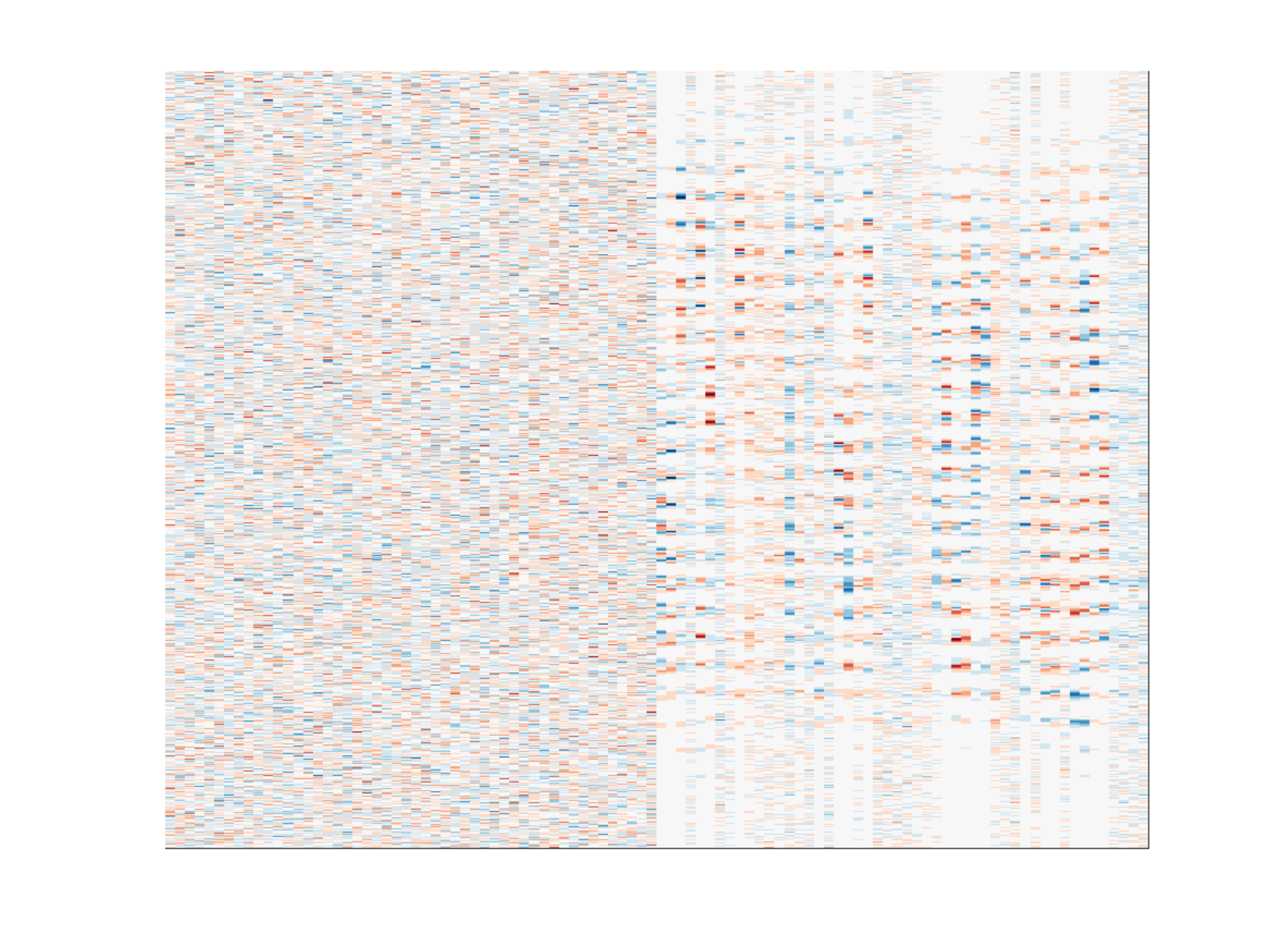}
}
\subfigure[Diagonal Covariance Matrix]{
\includegraphics[width=60mm]{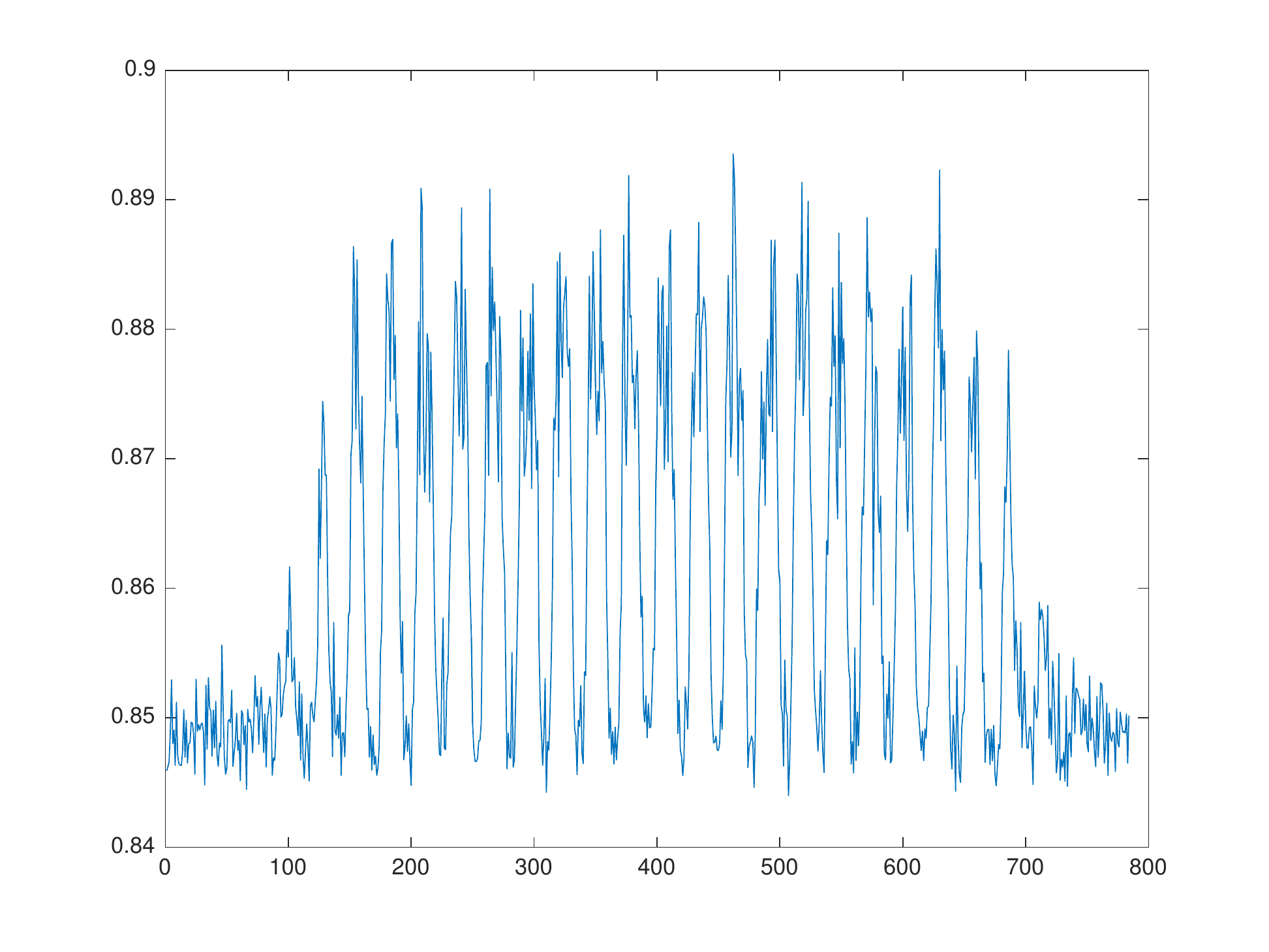}
}
\caption{\textbf{Simulation}: (a-b) SGD$_W$/SSD means $W$ is updated by SGD while other parameters are updated by SSD. The small/large step-size indicates the initials for descent algorithms; (c) For GRBM, we only show the training/testing result tuned on corresponding algorithms, while other hybrid algorithm is not considered. \textbf{MNIST}: (d) $N_h=50$ in the experiments; (e) The left is weight matrix learned by SGD and right one by SSD; (f) Diagonal covariance matrix of GRBM.}
\label{fig:simu}
\end{figure*}

\section{Experiments}
\label{sec:exp}

Since SSD is an alternative to SGD for RBM, we mainly compare the performance in vanilla version. 
As mentioned in discussion, SSD with adaptive step-size can also be implemented readily. 
However, this may misinterpret the significance of spectral descent, since adaptive step-size methods can also accelerate the practical convergence in general. 

\subsection{Simulation Study}

To quantitatively observe the behaviors or properties of SSD algorithm, our first experiment was run on synthetic dataset, with $N_v=100$ and $N_h=25$. 
4000 training and 1000 held-out testing data points are generated as binary vectors to test both Binary RBM and Gaussian RBM. 
The weights matrix $W$ was a random Gaussian matrix with a variance 0.5 for each element, and the biases are set as zero for convenience. 
However, we pretended this is an unknown information and allowed the algorithm to learn biases. 
In our model setting, we only set the $N_v$ and $N_h$ as simulated ones.

\begin{wrapfigure}{L}{5cm}
\vspace{0mm}
\centering
\hspace{2cm}
\includegraphics[width=60mm,clip,trim=0 0 0 30mm]{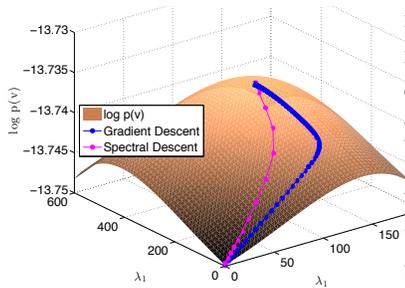}
\vspace{-2mm}
\caption{\textbf{Simulation}: GRBM training limited to a 3D manifold; in order to visualize this result is not fine-tuned.}
\label{fig:manifold}
\vspace{-2mm}
\end{wrapfigure}

The reconstruction error results are shown in Figure~\ref{fig:simu}. 
Figure~\ref{fig:simu}(a) illustrates the training performance evaluated by Binary RBM. 
To closely see the significance of SSD, we also implemented the \textbf{hybrid} descent algorithms, e.g., SSD$_W$/SGD represents only the weight matrix $W$ is updated according to spectral descent while all the other parameters follow the SGD update.  
We gradually modified updating all parameters by SGD to become updating by SSD. 
From Figure~\ref{fig:simu}(a), we can see that updating $W$ is crucial for training. 
Updating $W$ alone by SGD (i.e., SGD$_W$/SSD) has the similar behavior as updating all parameters by SGD, however, once we changed the updating mode to SSD$_W$/SGD, the behavior will become more similar to pure SSD, no matter for small step-size or large step-size. 
In addition, even we use the SSD to update all parameters, the performance is just slightly better than SSD$_W$/SGD. 
For Gaussian RBM, we use a simple model with isotropic covariance matrix $c\mathbf{I}$. 
Since we use the binary input, the GRBM should model worse than BRBM, as proved in Figure~\ref{fig:simu}(b). 
However, we can still observe that SSD converges faster than SGD, and this phenomenon can also be clearly visualized in Figure~\ref{fig:manifold}, where SSD prefers to find a shortcut to the local optimal. 

\subsection{MNIST and FreyFace Datasets}

In this experiment, we use the vectorized MNIST digit dataset that includes 60,000 training and 10,000 test images of handwritten digits with size $28\times28$. 
We test both Gaussian and Binary RBM for this dataset, though it is commonly modeled as BRBM. 
For this real dataset, we assume the covariance matrix as diagonal, i.e. $\text{diag}(\mathbf{c})$, where $\mathbf{c}$ is a vector with positive elements. 
We mainly evaluate sum of squared error for pixel-wise reconstruction, since the true normalized log-likelihood is intractable for RBM, which usually requires the annealing importance sampling \cite{salakhutdinov2008quantitative,kiwaki2015effect} to obtain an approximate result. 
Such an approximation may suffer high variance if insufficient samples are used. 
Additionally, we also evaluate the running time of SSD compared with SGD, when CD-10 is applied on a mini-batch with size 100. 
Since we found vanilla SGD converges too slow (even with non-fixed but exponentially decay stepsize), we applied the Nesterov acceleration \cite{sutskever2013importance} for SGD, but did not applied it to SSD, which seems an unfair comparison. 

\begin{wrapfigure}{L}{5cm}
\vspace{-5mm}
\centering
\hspace{-5mm}\includegraphics[height=45mm]{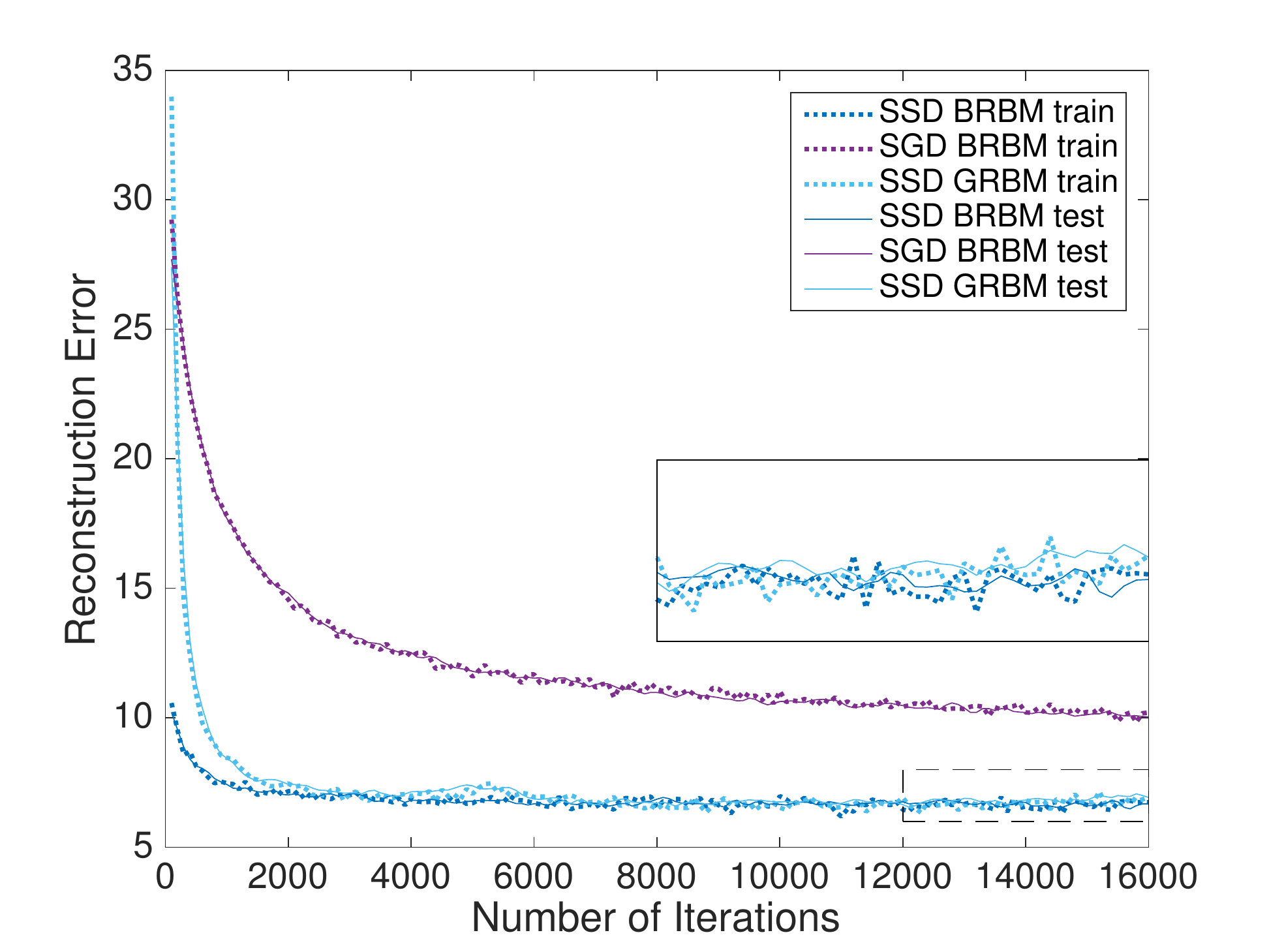}
\vspace{-4mm}
\caption{\textbf{FreyFace}: $N_h=200$.}
\label{fig:ff}
\vspace{-4mm}
\end{wrapfigure}

Figure~\ref{fig:simu}(d) shows the results running on MNIST. 
We found SSD on GRBM can achieve faster convergence rate than BRBM, though the error of initial hundreds of iterations is relatively large. 
Unlike the simulation, this is a counter-intuitive phenomenon, since our images has been binarized. 
However, one possible reason is that the covariance matrix enables more flexible modeling, allowing the more variation for same pixel position. 
Even we use the Nesterov SGD, there still exists a gap between the SSD and SGD for the first 30,000 iterations, where the performance of SGD has become stable but SSD implicates a trend for further decline. 
Figure~\ref{fig:simu}(e) shows the different weight matrix learned by SGD and SSD. 
The SGD learned matrix is more random while the SSD can learn more sparsity or patterns. 
If we plot the histogram for all the elements of $W$, we can also observe the similar sparsity for SSD (See figures in supplements).

We also compared the running time between SGD and SSD. 
For fair comparison, we also include the vanilla version of SGD. 
The results of wall-clock time illustrated in Table~\ref{tab:mnist} make sense since SVD and momentum update will require extra computations. 

The FreyFace\footnote{\scriptsize{\url{http://cs.nyu.edu/~roweis/data/frey_rawface.mat}}} dataset contains almost 2000 images of some person's face, taken from sequential frames of a small video with size $20\times28$. 
Each pixel value of face images in this dataset is real value between 0 and 1, which should fit GRBM well. 
We can also consider it as the Bernoulli probability for observation and apply BRBM, thus we test both BRBM and GRBM and show the result in Figure~\ref{fig:ff}. 
Similar to MNIST experiment, Nesterov acceleration is implemented for SGD, however, SGD on GRBM converges too slow even with thoroughly fine-tuning. 
Thus, we did not include this result in the figure. 
A counter-intuitive phenomenon also appears for SSD, i.e., BRBM can model the data better than GRBM, even if the data point is real-valued. 
One possible explanation is that the size of this dataset is small such that the more flexible GRBM is not well trained. 
 
\begin{table}[t]
\setlength\tabcolsep{4pt}
\centering
\caption{Running time per 1k iterations}
\begin{tabular}{|c|c|c|c|} \hline
 & \textbf{SSD} & \textbf{SGD} & \textbf{Nesterov SGD} \\ \hline
\textbf{GRBM} & 67s & 54s & 58s\\
\textbf{BRBM} & 48s & 44s & 45.5s\\
\hline\end{tabular}
\label{tab:mnist}
\end{table}

\section{Conlusion and Future Work}
\label{sec:conl}

In this paper, we generalize the stochastic spectral descent algorithm to a broader RBM family, enlarging the domain of SSD. 
A unifying SSD algorithm is summarized due to the newly proved upper bound. 
Since SSD empirically shows potential advantages over stochastic gradient descent, the emergence of SSD may imply an alternative or substitute of SGD for RBM related model, or adaptive step-size modification.
An important avenue of improvement might explore other gradient estimation method for RBM and integrate with SSD. 
For example, Hamiltonian Monte Carlo (HMC) method \cite{neal2011mcmc} allows efficient sampling from conditional distributions that might be difficult to sample from, like mean-covariance RBM \cite{hinton2010modeling,krizhevsky2010factored}. 
HMC may also be helpful to improve the overall performance of SSD, and the interaction between HMC and SSD is our future work. 
Additionally, beyond the discussion of traditional $l_2$ norm and infinite norm,  $l_0$ or $l_1$ norm constrain can be a new direction to derive other optimization algorithms which allow partial updates for its sparse property. 

\bibliography{refs.bib}

\begin{thebibliography}{10}

\bibitem{bhatia2013matrix}
Rajendra Bhatia.
\newblock {\em Matrix analysis}, volume 169.
\newblock Springer Science \& Business Media, 2013.

\bibitem{carlson2016stochastic}
David Carlson, Ya-Ping Hsieh, Edo Collins, Lawrence Carin, and Volkan Cevher.
\newblock Stochastic spectral descent for discrete graphical models.
\newblock {\em IEEE Journal of Selected Topics in Signal Processing},
  10(2):296--311, 2016.

\bibitem{carlson2015stochastic}
David~E Carlson, Volkan Cevher, and Lawrence Carin.
\newblock Stochastic spectral descent for restricted boltzmann machines.
\newblock In {\em International Conference on Artificial Intelligence and
  Statistics}, pages 11--119, 2015.

\bibitem{carlson2015preconditioned}
David~E Carlson, Edo Collins, Ya-Ping Hsieh, Lawrence Carin, and Volkan Cevher.
\newblock Preconditioned spectral descent for deep learning.
\newblock In {\em Advances in Neural Information Processing Systems}, pages
  2971--2979, 2015.

\bibitem{cho2013gaussian}
Kyung~Hyun Cho, Tapani Raiko, and Alexander Ilin.
\newblock Gaussian-bernoulli deep boltzmann machine.
\newblock In {\em Neural Networks (IJCNN), The 2013 International Joint
  Conference on}, pages 1--7. IEEE, 2013.

\bibitem{dauphin2014identifying}
Yann~N Dauphin, Razvan Pascanu, Caglar Gulcehre, Kyunghyun Cho, Surya Ganguli,
  and Yoshua Bengio.
\newblock Identifying and attacking the saddle point problem in
  high-dimensional non-convex optimization.
\newblock In {\em Advances in neural information processing systems}, pages
  2933--2941, 2014.

\bibitem{duchi2011adaptive}
John Duchi, Elad Hazan, and Yoram Singer.
\newblock Adaptive subgradient methods for online learning and stochastic
  optimization.
\newblock {\em Journal of Machine Learning Research}, 12(Jul):2121--2159, 2011.

\bibitem{halko2011finding}
Nathan Halko, Per-Gunnar Martinsson, and Joel~A Tropp.
\newblock Finding structure with randomness: Probabilistic algorithms for
  constructing approximate matrix decompositions.
\newblock {\em SIAM review}, 53(2):217--288, 2011.

\bibitem{hinton2002training}
Geoffrey~E Hinton.
\newblock Training products of experts by minimizing contrastive divergence.
\newblock {\em Neural computation}, 14(8):1771--1800, 2002.

\bibitem{hinton2010modeling}
Geoffrey~E Hinton et~al.
\newblock Modeling pixel means and covariances using factorized third-order
  boltzmann machines.
\newblock In {\em Computer Vision and Pattern Recognition (CVPR), 2010 IEEE
  Conference on}, pages 2551--2558. IEEE, 2010.

\bibitem{hinton2006fast}
Geoffrey~E Hinton, Simon Osindero, and Yee-Whye Teh.
\newblock A fast learning algorithm for deep belief nets.
\newblock {\em Neural computation}, 18(7):1527--1554, 2006.

\bibitem{hinton2006reducing}
Geoffrey~E Hinton and Ruslan~R Salakhutdinov.
\newblock Reducing the dimensionality of data with neural networks.
\newblock {\em Science}, 313(5786):504--507, 2006.

\bibitem{hinton2012improving}
Geoffrey~E Hinton, Nitish Srivastava, Alex Krizhevsky, Ilya Sutskever, and
  Ruslan~R Salakhutdinov.
\newblock Improving neural networks by preventing co-adaptation of feature
  detectors.
\newblock {\em arXiv preprint arXiv:1207.0580}, 2012.

\bibitem{kelner2014almost}
Jonathan~A Kelner, Yin~Tat Lee, Lorenzo Orecchia, and Aaron Sidford.
\newblock An almost-linear-time algorithm for approximate max flow in
  undirected graphs, and its multicommodity generalizations.
\newblock In {\em Proceedings of the Twenty-Fifth Annual ACM-SIAM Symposium on
  Discrete Algorithms}, pages 217--226. Society for Industrial and Applied
  Mathematics, 2014.

\bibitem{kingma2014adam}
Diederik Kingma and Jimmy Ba.
\newblock Adam: A method for stochastic optimization.
\newblock {\em arXiv preprint arXiv:1412.6980}, 2014.

\bibitem{kiwaki2015effect}
Taichi Kiwaki and Kazuyuki Aihara.
\newblock An effect of initial distribution covariance for annealing gaussian
  restricted boltzmann machines.
\newblock {\em Artificial Intelligence Research}, 4(1):p53, 2015.

\bibitem{krizhevsky2010factored}
Alex Krizhevsky, Geoffrey~E Hinton, et~al.
\newblock Factored 3-way restricted boltzmann machines for modeling natural
  images.
\newblock In {\em International conference on artificial intelligence and
  statistics}, pages 621--628, 2010.

\bibitem{marks2001diffusion}
Tim~K Marks and Javier~R Movellan.
\newblock Diffusion networks, product of experts, and factor analysis.
\newblock In {\em Int. Conf. on Independent Component Analysis}, pages
  481--485, 2001.

\bibitem{mohamed2009bayesian}
Shakir Mohamed, Zoubin Ghahramani, and Katherine~A Heller.
\newblock Bayesian exponential family pca.
\newblock In {\em Advances in neural information processing systems}, pages
  1089--1096, 2009.

\bibitem{murphy2012machine}
Kevin~P Murphy.
\newblock {\em Machine learning: a probabilistic perspective}.
\newblock MIT pres, 2012.

\bibitem{neal2011mcmc}
Radford~M Neal et~al.
\newblock Mcmc using hamiltonian dynamics.
\newblock {\em Handbook of Markov Chain Monte Carlo}, 2:113--162, 2011.

\bibitem{salakhutdinov2008quantitative}
Ruslan Salakhutdinov and Iain Murray.
\newblock On the quantitative analysis of deep belief networks.
\newblock In {\em Proceedings of the 25th international conference on Machine
  learning}, pages 872--879. ACM, 2008.

\bibitem{sutskever2013importance}
Ilya Sutskever, James Martens, George~E Dahl, and Geoffrey~E Hinton.
\newblock On the importance of initialization and momentum in deep learning.
\newblock In {\em Proceedings of the 30th international conference on Machine
  learning}, page 1139–1147. ACM, 2013.

\bibitem{tieleman2008training}
Tijmen Tieleman.
\newblock Training restricted boltzmann machines using approximations to the
  likelihood gradient.
\newblock In {\em Proceedings of the 25th international conference on Machine
  learning}, pages 1064--1071. ACM, 2008.

\bibitem{tieleman2012lecture}
Tijmen Tieleman and Geoffrey Hinton.
\newblock Lecture 6.5-rmsprop: Divide the gradient by a running average of its
  recent magnitude.
\newblock {\em COURSERA: Neural Networks for Machine Learning}, 4(2), 2012.

\bibitem{zeiler2012adadelta}
Matthew~D Zeiler.
\newblock Adadelta: an adaptive learning rate method.
\newblock {\em arXiv preprint arXiv:1212.5701}, 2012.

\end{thebibliography}
\bibliographystyle{plain}

\clearpage

\section{Appendices}

\subsection{Gradient Computation}
\begin{align*}
\frac{\partial \mathcal{L}}{\partial W} &= \mathbb{E}_{p}[C^{-1}\mathbf{vh}^{\top}] - \mathbb{E}_{e}[C^{-1}\mathbf{vh}^{\top}] \\
\frac{\partial \mathcal{L}}{\partial C^{-1}} &= \mathbb{E}_{p}[W\mathbf{h}\mathbf{v}^\top-(\mathbf{v}-\mathbf{b})(\mathbf{v}-\mathbf{b})^{\top}/2]- \mathbb{E}_{e}[W\mathbf{h}\mathbf{v}^\top-(\mathbf{v}-\mathbf{b})(\mathbf{v}-\mathbf{b})^{\top}/2] \\
\frac{\partial \mathcal{L}}{\partial \mathbf{b}} &= \mathbb{E}_{p}[C^{-1}\mathbf{v}] - \mathbb{E}_{e}[C^{-1}\mathbf{v}] \\
\frac{\partial \mathcal{L}}{\partial \mathbf{a}} &= \mathbb{E}_{p}[\mathbf{h}] - \mathbb{E}_{e}[\mathbf{h}] 
\end{align*}
Consider special case, $C^{-1}$ is diagonal, thus denoting as a vector.
\begin{align*}
\frac{\partial \mathcal{L}}{\partial C^{-1}} &= \mathbb{E}_{p}[\mathbf{v}\odot W\mathbf{h} -(\mathbf{v}-\mathbf{b})\odot(\mathbf{v}-\mathbf{b})/2]- \mathbb{E}_{e}[\mathbf{v}\odot W\mathbf{b} -(\mathbf{v}-\mathbf{b})\odot(\mathbf{v}-\mathbf{b})/2] \\
\frac{\partial \mathcal{L}}{\partial \mathbf{b}} &= \mathbb{E}_{p}[C^{-1}\odot\mathbf{v}] - \mathbb{E}_{e}[C^{-1}\odot\mathbf{v}] 
\end{align*}
Furthermore, the element of diagonal should be positive, $C=\text{diag}\{e^{c_1},\dots,e^{c_R}\}$.
\begin{align*}
\frac{\partial \mathcal{L}}{\partial \ln (C)} &= C^{-1}\odot \left\{ \mathbb{E}_{p}[(\mathbf{v}-\mathbf{b})\odot(\mathbf{v}-\mathbf{b})/2 - \mathbf{v}\odot W\mathbf{h}]- \mathbb{E}_{e}[(\mathbf{v}-\mathbf{b})\odot(\mathbf{v}-\mathbf{b})/2- \mathbf{v}\odot W\mathbf{b} ] \right\}
\end{align*}
if $C^{-1}=\text{diag}\{e^{c_1},\dots,e^{c_R}\}$,
\begin{align*}
\frac{\partial \mathcal{L}}{\partial \ln (C^{-1})} &= C^{-1}\odot\left\{ \mathbb{E}_{p}[\mathbf{v}\odot W\mathbf{h}-(\mathbf{v}-\mathbf{b})\odot(\mathbf{v}-\mathbf{b})/2]- \mathbb{E}_{e}[\mathbf{v}\odot W\mathbf{b}-(\mathbf{v}-\mathbf{b})\odot(\mathbf{v}-\mathbf{b})/2] \right\}
\end{align*}

\subsection{Derivation of $g(\bm\theta)$}
$N$ is the size of dataset.
\begin{align*}
g(\bm\theta)&=-\frac{1}{N}\sum_{n=1}^N\log\left(\sum_{\mathbf{h}}\exp\left( \mathbf{v}_n^\top C^{-1} W\mathbf{h}-\frac{1}{2}(\mathbf{v}_n-\mathbf{b})^\top C^{-1}(\mathbf{v}_n-\mathbf{b})+\mathbf{h}^\top\mathbf{a} \right)\right) \\
&=-\frac{1}{N}\sum_{n=1}^N\log\left(\exp\left(-\frac{1}{2}(\mathbf{v}_n-\mathbf{b})^\top C^{-1}(\mathbf{v}_n-\mathbf{b})\right)\prod_{k=1}^{N_h}\left(1+\exp\left(\mathbf{v}_n^\top C^{-1}W_{\cdot,k}+ a_k\right)\right)\right) \\
&=\frac{1}{N}\sum_{n=1}^N \left(\frac{1}{2}(\mathbf{v}_n-\mathbf{b})^\top C^{-1}(\mathbf{v}_n-\mathbf{b})-\sum_{k=1}^{N_h}\log\left( 1+\exp\left(\mathbf{v}_n^\top C^{-1} W_{\cdot,k}+ a_k\right)\right)\right)
\end{align*}
The first part of $g$ is linear to $C^{-1}$, and the second part is concave w.r.t. $W$, $\mathbf{a}$, and $C^{-1}$, since $-\log(1+e^x)$ is concave.
\begin{align*}
g(\mathbf{b}+\Delta\mathbf{b}) &= g(\mathbf{b}) + \langle g'(\mathbf{b}), \Delta\mathbf{b} \rangle + \frac{1}{2}\Delta\mathbf{b}^{\top} C^{-1} \Delta\mathbf{b} 
\Rightarrow g(\mathbf{b}+\Delta\mathbf{b}) \leq g(\mathbf{b}) + \langle g'(\mathbf{b}), \Delta\mathbf{b} \rangle + \frac{N_v r(C)}{2} \|\Delta\mathbf{b}\|_\infty^2
\end{align*}
One option for $r(C)$ is $1/\lambda_{\min}(C)$, where $\lambda_{\min}(C)$ is minimum eigenvalue of $C$.

\subsection{Theorem Proof}
\begin{lem}\label{lem:lse} 
Consider log-sum-exp function $lse_{\bm\omega}(\mathbf{x})=\log\sum_{i}\omega_i\exp(x_i)$ where $\bm\omega$ is independent of $\mathbf{x}$, the following inequality $lse_{\bm\omega}(\mathbf{x}+\Delta\mathbf{x}) \leq lse_{\bm\omega}(\mathbf{x}) + \langle \nabla lse_{\bm\omega}(\mathbf{x}), \Delta\mathbf{x}\rangle + \frac{1}{2}\|\Delta\mathbf{x}\|_\infty $ holds. 
\end{lem}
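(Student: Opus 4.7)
My plan is to prove this by a second-order Taylor expansion along the line segment connecting $\mathbf{x}$ and $\mathbf{x} + \Delta\mathbf{x}$, and then uniformly bound the quadratic form that appears in the remainder. Concretely, I would define $\phi(t) = lse_{\bm\omega}(\mathbf{x} + t\Delta\mathbf{x})$ for $t \in [0,1]$ and use the integral form of Taylor's theorem,
\begin{equation*}
\phi(1) = \phi(0) + \phi'(0) + \int_0^1 (1-t)\,\phi''(t)\,\mathrm{d}t.
\end{equation*}
Since $\phi'(0) = \langle \nabla lse_{\bm\omega}(\mathbf{x}), \Delta\mathbf{x}\rangle$, the whole task reduces to controlling $\phi''(t)$.

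The next step is to compute the Hessian of $lse_{\bm\omega}$ at an arbitrary point $\mathbf{y}$. A direct calculation yields
\begin{equation*}
\nabla^2 lse_{\bm\omega}(\mathbf{y}) = \mathrm{diag}(\mathbf{p}) - \mathbf{p}\mathbf{p}^\top, \qquad p_i = \frac{\omega_i e^{y_i}}{\sum_j \omega_j e^{y_j}},
\end{equation*}
so that $\mathbf{p}$ is a probability vector on the index set. Consequently
\begin{equation*}
\phi''(t) = \Delta\mathbf{x}^\top \bigl(\mathrm{diag}(\mathbf{p}(t)) - \mathbf{p}(t)\mathbf{p}(t)^\top\bigr)\,\Delta\mathbf{x} = \mathrm{Var}_{\mathbf{p}(t)}(\Delta x_I),
\end{equation*}
where $I$ is the random index with distribution $\mathbf{p}(t)$. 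This is the key observation: the quadratic form is exactly a variance under the softmax distribution.

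Now I would bound this variance uniformly in $t$. Since $\Delta x_I$ is a bounded random variable with $|\Delta x_I| \leq \|\Delta\mathbf{x}\|_\infty$ almost surely, the crude bound $\mathrm{Var}_{\mathbf{p}(t)}(\Delta x_I) \leq \mathbb{E}_{\mathbf{p}(t)}[(\Delta x_I)^2] \leq \|\Delta\mathbf{x}\|_\infty^2$ suffices (one could also invoke Popoviciu's inequality on the range $[-\|\Delta\mathbf{x}\|_\infty, \|\Delta\mathbf{x}\|_\infty]$). Plugging $\phi''(t) \leq \|\Delta\mathbf{x}\|_\infty^2$ into the Taylor remainder and using $\int_0^1 (1-t)\,\mathrm{d}t = \tfrac{1}{2}$ yields
\begin{equation*}
lse_{\bm\omega}(\mathbf{x}+\Delta\mathbf{x}) \leq lse_{\bm\omega}(\mathbf{x}) + \langle \nabla lse_{\bm\omega}(\mathbf{x}), \Delta\mathbf{x}\rangle + \tfrac{1}{2}\|\Delta\mathbf{x}\|_\infty^2,
\end{equation*}
which is the claimed inequality (the statement appears to have a missing square on $\|\Delta\mathbf{x}\|_\infty$, but the downstream theorems all use the squared form).

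The main conceptual obstacle is recognizing the Hessian's quadratic form as a variance, because the naive operator-norm bound $\|\nabla^2 lse\|_{\mathrm{op}} \leq 1$ only gives the $\ell_2$-based inequality with $\|\Delta\mathbf{x}\|_2^2$, which is strictly weaker than what is needed. Once the variance interpretation is in hand, the remaining steps are routine and independent of the weights $\bm\omega$.
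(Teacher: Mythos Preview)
Your argument is correct. The paper itself does not give a proof of this lemma at all; its ``proof'' is simply the citation ``See \cite{carlson2015stochastic}.'' Your Taylor-remainder computation with the Hessian identified as $\mathrm{diag}(\mathbf{p})-\mathbf{p}\mathbf{p}^\top$ and the quadratic form bounded as a variance is precisely the standard proof from that reference, and it is also the same template the present paper follows when it \emph{does} write out the proof of the analogous Lemma~\ref{lem:lse2} (integral remainder, explicit Hessian entries, bound by $\|\cdot\|_\infty^2$). Your observation that the stated bound should carry a square on $\|\Delta\mathbf{x}\|_\infty$ is also right and matches every downstream use in the paper. One tacit assumption worth making explicit is $\omega_i\ge 0$, so that $\mathbf{p}$ is a genuine probability vector; this holds in all the paper's applications since the $\omega_i$ are exponentials.
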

\begin{proof}
See \cite{carlson2015stochastic}.
\end{proof}
\begin{thm}\label{thm:bound_a} When fixing other parameters, we have
\begin{align}
f(\mathbf{a}+\Delta\mathbf{a}) &\leq f(\mathbf{a}) + \langle \nabla_{\mathbf{a}} f(\mathbf{a}), \Delta\mathbf{a}\rangle + \frac{N_h}{2}\|\Delta\mathbf{a}\|_\infty^2  \label{eq:bound_a}
\end{align}
\end{thm}

\begin{proof}
The log partition function can be written as a sum over only the hidden units to give a similar form to Lemma \ref{lem:lse}.
\begin{align*}
f(\bm\theta) &=\log\left(\int\sum_{\mathbf{h}}\exp\left(\mathbf{v}^\top C^{-1} W\mathbf{h}-\frac{1}{2}(\mathbf{v}-\mathbf{b})^\top C^{-1}(\mathbf{v}-\mathbf{b})+\mathbf{h}^\top\mathbf{a}\right)\mathrm{d}\mathbf{v}\right)\\
&= \log\left(\sum_{i=1}^{2^{N_h}}\omega_i\exp(\mathbf{h}_i^\top\mathbf{a}) \right)
\end{align*}
where 
\begin{align*}
\omega_i &=\int \exp\left(\mathbf{v}^\top C^{-1} W\mathbf{h}_i-\frac{1}{2}(\mathbf{v}-\mathbf{b})^\top C^{-1}(\mathbf{v}-\mathbf{b})\right)\mathrm{d}\mathbf{v} \\
&= (2\pi)^{\frac{N_v}{2}}|C|^{\frac{1}{2}}\exp\left(\mathbf{b}^\top C^{-1} W\mathbf{h}_i+ \frac{(W\mathbf{h}_i)^\top C^{-1} W\mathbf{h}_i}{2} \right)
\end{align*} 
and it does not depend on $\mathbf{a}$. 
Define $\mathbf{H}\in\{0,1\}^{N_h,2^{N_h}}$ be the matrix with each column taking one possible configuration of $\mathbf{h}$, thus we have
\begin{align*}
f(\bm\theta) = \log\left(\bm\omega^\top\exp(\mathbf{H}^\top\mathbf{a}) \right)
\end{align*}
By Lemma \ref{lem:lse}, we have
\begin{align*}
f(\mathbf{a} + \Delta\mathbf{a}) \leq f(\mathbf{a}) + \langle \nabla_{\mathbf{H}^\top\mathbf{a}} lse(\mathbf{H}^\top\mathbf{a}), \mathbf{H}^\top\Delta\mathbf{a}\rangle + \frac{1}{2}\|\mathbf{H}^\top\Delta\mathbf{a}\|_\infty^2
\end{align*}
where we omit other fixed parameters in $\bm\theta$. Notice the fact that $\mathbf{H} \nabla_{\mathbf{H}^\top\mathbf{a}} lse(\mathbf{H}^\top\mathbf{a})= \nabla_{\mathbf{a}}f(\mathbf{a})$, thus the $(\nabla_{\mathbf{H}^\top\mathbf{a}} lse(\mathbf{H}^\top\mathbf{a}))^\top \mathbf{H}^\top\Delta\mathbf{a} = (\nabla_{\mathbf{a}}f(\mathbf{a}))^\top \Delta\mathbf{a}$ holds.

In addition, $\|\mathbf{H}^\top\Delta\mathbf{a}\|_\infty^2=\max_i|\mathbf{h}_i^\top \Delta\mathbf{a}|\leq N_h\|\Delta\mathbf{a}\|_\infty^2$.
\end{proof}

\begin{thm}\label{thm:bound_b} When fixing other parameters, we have
\begin{align}
f(\mathbf{b}+\Delta\mathbf{b}) &\leq f(\mathbf{b}) + \langle \nabla_{\mathbf{b}} f(\mathbf{b}), \Delta\mathbf{b}\rangle + \frac{N_hRr(C)}{2}\|\Delta\mathbf{b}\|_\infty^2 \label{eq:bound_b}
\end{align}
where $r(C)$ is a local constant dependent on the parameter in previous iteration.
\end{thm}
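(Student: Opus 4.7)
The plan is to mimic the proof of Theorem \ref{thm:bound_a}: first re-express $f$ as a log-sum-exp that depends on $\mathbf{b}$ only linearly inside the exponent, then invoke Lemma \ref{lem:lse}, and finally translate the resulting $l_\infty$ bound in the auxiliary variable back into a bound in $\Delta\mathbf{b}$. The paper has already done the first step explicitly: integrating out $\mathbf{v}$ gives
\begin{align*}
f(\bm\theta) = \log\left(\sum_{i=1}^{2^{N_h}}\omega_i\exp(\mathbf{b}^\top C^{-1}W\mathbf{h}_i)\right),
\end{align*}
where the $\omega_i$ are independent of $\mathbf{b}$. Stacking the $2^{N_h}$ configurations into the binary matrix $\mathbf{H}\in\{0,1\}^{N_h\times 2^{N_h}}$ and setting $\mathbf{x}=\mathbf{H}^\top W^\top C^{-1}\mathbf{b}$, the function becomes exactly $lse_{\bm\omega}(\mathbf{x})$.

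With this reduction in hand, I would apply Lemma \ref{lem:lse} to the perturbation $\Delta\mathbf{x}=\mathbf{H}^\top W^\top C^{-1}\Delta\mathbf{b}$, obtaining
\begin{align*}
f(\mathbf{b}+\Delta\mathbf{b}) \leq f(\mathbf{b}) + \langle \nabla_{\mathbf{x}}lse_{\bm\omega}(\mathbf{x}),\Delta\mathbf{x}\rangle + \tfrac{1}{2}\|\Delta\mathbf{x}\|_\infty^2.
\end{align*}
The linear term converts back to the desired $\langle \nabla_{\mathbf{b}}f(\mathbf{b}),\Delta\mathbf{b}\rangle$ via the chain rule $\nabla_{\mathbf{b}}f = C^{-1}W\mathbf{H}\nabla_{\mathbf{x}}lse_{\bm\omega}(\mathbf{x})$ together with symmetry of $C^{-1}$, exactly as in the proof of Theorem \ref{thm:bound_a}. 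So all that remains is to control $\|\mathbf{H}^\top W^\top C^{-1}\Delta\mathbf{b}\|_\infty^2$ by a constant multiple of $\|\Delta\mathbf{b}\|_\infty^2$.

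The main technical step, and the real obstacle, is this last bound: it is where the assumption $\|W\|_2\leq R$ and the spectrum of $C$ both have to enter. My plan is to estimate, for each configuration $\mathbf{h}_i$,
\begin{align*}
|\mathbf{h}_i^\top W^\top C^{-1}\Delta\mathbf{b}|^2 \leq \|\mathbf{h}_i\|_2^2\,\|W\|_2^2\,\|C^{-1}\Delta\mathbf{b}\|_2^2,
\end{align*}
and then use $\|\mathbf{h}_i\|_2^2\leq N_h$, $\|W\|_2^2\leq R^2$, and an eigenvalue bound on $C^{-1}$ (i.e.\ $\|C^{-1}\Delta\mathbf{b}\|_2^2\leq \|\Delta\mathbf{b}\|_2^2/\lambda_{\min}(C)^2$), with $\|\Delta\mathbf{b}\|_2^2$ in turn majorized by $\|\Delta\mathbf{b}\|_\infty^2$ up to a dimensional factor. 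Taking the maximum over $i$ produces a bound of the shape $N_h\cdot(\text{constant in }R\text{ and the eigenvalues of }C)\cdot\|\Delta\mathbf{b}\|_\infty^2$.

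Matching this constant to the stated form $N_hR\,r(C)/2$ is a matter of collecting the polynomial dependence on $R$ and the spectral quantities of $C$ into the single local constant $r(C)$ — the same book-keeping already used in Theorems \ref{thm:bound_C} and in the bound on $g(\mathbf{b})$, where $r(C)$ is defined implicitly in terms of $\lambda_{\min}(C)$. Since the paper explicitly remarks (after Theorem \ref{thm:bound_W}) that the precise constants are absorbed into the learning rate, the main care is to make the definition of $r(C)$ here consistent with its other occurrences; once that is done, the inequality (\ref{eq:bound_b}) follows directly from the Lemma \ref{lem:lse} application above.
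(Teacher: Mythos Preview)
Your proposal is correct and follows essentially the same route as the paper: rewrite $f$ as $lse_{\bm\omega}$ in the variable $\mathbf{b}^\top C^{-1}W\mathbf{h}_i$, apply Lemma~\ref{lem:lse}, convert the linear term back via the chain rule, and then bound $\max_i|\Delta\mathbf{b}^\top C^{-1}W\mathbf{h}_i|^2$ using $\|W\|_2\leq R$ and the spectrum of $C$. The only cosmetic difference is that the paper bounds the error term a bit more directly (arriving at $N_hR\|C^{-1}\Delta\mathbf{b}\|_\infty^2$ and then $N_hR\,r(C)\|\Delta\mathbf{b}\|_\infty^2$ with $r(C)=1/\lambda_{\min}(C)$), whereas your Cauchy--Schwarz chain picks up an extra factor of $R$ and a dimensional factor $N_v$; as you already note, these discrepancies are immaterial since the constant is absorbed into $r(C)$ and ultimately the learning rate.
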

\begin{proof}
We can also rewrite log partition function as the following formulation.
\begin{align*}
f(\bm\theta) = \log\left(\sum_{i=1}^{2^{N_h}}\omega_i\exp(\mathbf{b}^\top C^{-1}W\mathbf{h}_i) \right)
\end{align*}
where $\omega_i=(2\pi)^{\frac{N_v}{2}}|C|^{\frac{1}{2}}\exp\left(\mathbf{h}_i^\top\mathbf{a}+ \frac{1}{2}(W\mathbf{h}_i)^\top C^{-1}W\mathbf{h}_i \right)$. 
Following the derivation of proof in Theorem \ref{thm:bound_a}, we only need to consider $\max_i|\Delta\mathbf{b}^\top C^{-1}W\mathbf{h}_i|$. 
Using the assumption $\|W\|_2 \leq R$, this term is bounded by $N_hR\|\Delta\mathbf{b}^\top C^{-1}\|_\infty^2$. 
Denote $\lambda_{\min}(C)$ and $\lambda_{\max}(C)$ as the maximum and minimum eigen values of covariance matrix $C$, thus there exist a constant $r(C)\in[\lambda_{\min}(C), \lambda_{\max}(C)]$ such that the desired term bounded by $N_hRr(C)\|\Delta\mathbf{b}\|_\infty^2$. 
Obviously, $r(C)=1/\lambda_{\min(C)}$ is one option.
\end{proof}

\begin{prop}\label{prop:logdet} 
$-\log|C^{-1}|=\log|C|$.
\end{prop}
\begin{thm}\label{thm:bound_C} When fixing other parameters, we have
\begin{align}
f(C^{-1}+U) &\leq f(C^{-1}) + \langle \nabla_{C^{-1}} f(C^{-1}), U\rangle + \left(\frac{N_h^2R^2}{2}+r(C)\right)\|U\|_{S_\infty}^2 \label{eq:bound_C}
\end{align}
where $U$ is positive semidefinite and inner product is defined as trace for matrix.
\end{thm}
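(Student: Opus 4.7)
The plan is to decompose $f(C^{-1}) = f_1(C^{-1}) + f_2(C^{-1})$, where $f_1(C^{-1}) = \log\sum_{i=1}^{2^{N_h}} \omega_i \exp(x_i)$ is the log-sum-exp piece with
$x_i = \bigl(\mathbf{b}+\tfrac{1}{2}W\mathbf{h}_i\bigr)^\top C^{-1} W\mathbf{h}_i = \mathrm{tr}\bigl(C^{-1} M_i\bigr)$, $M_i := W\mathbf{h}_i \mathbf{b}^\top + \tfrac{1}{2}W\mathbf{h}_i(W\mathbf{h}_i)^\top$, and $f_2(C^{-1}) = \tfrac{1}{2}\log|C| = -\tfrac{1}{2}\log|C^{-1}|$ is the log-determinant piece. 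I would bound each separately and then add; the gradients split linearly, so combining the two local-quadratic upper bounds produces the desired form.

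For $f_1$, note that $x_i$ is \emph{linear} in $C^{-1}$, so $\Delta x_i = \mathrm{tr}(U M_i)$. Apply Lemma \ref{lem:lse} to the vector $\mathbf{x}(C^{-1})$ exactly as in the proof of Theorem \ref{thm:bound_a}: this gives the gradient term $\langle \nabla_{C^{-1}} f_1, U\rangle$ automatically (via chain rule through the trace pairing), plus an error $\tfrac{1}{2}\max_i |\Delta x_i|^2$. Bound $|\Delta x_i|$ via matrix duality $|\mathrm{tr}(UM_i)| \le \|M_i\|_{S_1}\|U\|_{S_\infty}$; since $M_i$ has rank at most two with singular values controlled by $\|W\mathbf{h}_i\|_2 \le R\sqrt{N_h}$ and $\|\mathbf{b}\|_2$, the dominant contribution comes from the quadratic piece $\tfrac12 W\mathbf{h}_i(W\mathbf{h}_i)^\top$ and yields a $\tfrac{N_h^2 R^2}{2}\|U\|_{S_\infty}^2$-type term; the residual pieces involving $\|\mathbf{b}\|_2$ are absorbed into $r(C)$ together with the log-determinant constant.

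For $f_2$, I would use the second-order Taylor expansion of $-\log|\cdot|$ with integral remainder (a standard identity for the log-det on the PSD cone):
\begin{align*}
-\log|C^{-1}+U| = -\log|C^{-1}| - \mathrm{tr}(CU) + \int_0^1 (1-t)\,\mathrm{tr}\bigl((C^{-1}+tU)^{-1} U (C^{-1}+tU)^{-1} U\bigr)\,dt.
\end{align*}
The first-order term matches $\langle \nabla_{C^{-1}} f_2, U\rangle = -\tfrac{1}{2}\mathrm{tr}(CU)$. For the remainder, the hypothesis $U \succeq 0$ is crucial: it forces $C^{-1}+tU \succeq C^{-1}$ for $t\in[0,1]$, hence $(C^{-1}+tU)^{-1} \preceq C$ in the Loewner order. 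Writing the integrand as $\|A_t^{1/2} U A_t^{1/2}\|_F^2$ with $A_t := (C^{-1}+tU)^{-1}$ and applying submultiplicativity of operator norm under Frobenius norm gives an upper bound of $\|A_t\|_{S_\infty}^2\|U\|_F^2 \le \lambda_{\max}(C)^2 \cdot N_v \|U\|_{S_\infty}^2$. Integrating $(1-t)$ over $[0,1]$ produces a coefficient independent of $U$, which can be packaged into the local constant $r(C)$.

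The main obstacle I anticipate is the log-determinant step: without the PSD assumption on $U$, the matrix $(C^{-1}+tU)^{-1}$ along the segment could blow up and no bound of the form $r(C)\|U\|_{S_\infty}^2$ would hold, so the positivity of $U$ must be invoked carefully to replace the varying inverse by a constant determined by the eigenvalues of $C$. A secondary nuisance is just bookkeeping — making sure that the constant extracted from the dual-norm bound on $M_i$ for $f_1$ combines cleanly with the $\lambda_{\max}(C)^2 N_v/4$ factor from $f_2$ into the single symbol $r(C)$ alongside the clean $\tfrac{N_h^2 R^2}{2}$ prefactor stated in the theorem.
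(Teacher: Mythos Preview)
Your proposal is correct and follows essentially the same route as the paper: split $f$ into the log-sum-exp piece (whose exponent is linear in $C^{-1}$, so Lemma~\ref{lem:lse} applies and the chain rule through the trace pairing recovers the gradient term) and the $-\tfrac12\log|C^{-1}|$ piece (handled by the integral Taylor remainder, then bounded via $\|U\|_F^2\le N_v\|U\|_{S_\infty}^2$). If anything, you are more explicit than the paper about why the hypothesis $U\succeq 0$ matters---it is what lets you replace $(C^{-1}+tU)^{-1}\preceq C$ uniformly in $t$ so that the remainder constant depends only on the eigenvalues of $C$; the paper's proof simply asserts ``there exists a constant $r'(C)$'' at that step.
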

\begin{proof}
With Proposition \ref{prop:logdet} we can rewritten log partition function as
\begin{align*}
f(\bm\theta) =& \log\left(\sum_{i=1}^{2^{N_h}}\omega_i\exp\left( \left(\mathbf{b}+\frac{1}{2}(W\mathbf{h}_i)\right)^\top C^{-1}W\mathbf{h}_i \right) \right) - \frac{1}{2}\log|C^{-1}|
\end{align*}
where $\omega_i=(2\pi)^{\frac{N_v}{2}}\exp\left(\mathbf{h}_i^\top\mathbf{a}\right)$. We also have
\begin{align*}
-\log|C^{-1}+U| - (-\log|C^{-1}|) = - \langle C^\top, U\rangle - \int_0^1 (1-t)vec(U)^\top \nabla^2 \log|C^{-1}+tU|vec(U)\mathrm{d}t = - \langle C^\top, U\rangle + A
\end{align*}
wher $A$ is non-negative and there exists a constant $r'(C)$ such that $A\leq r'(C)\|vec(U)\|_2^2=r'(C)\|U\|_F^2=r'(C)\|\bm\lambda(U)\|_2^2$ where $\bm\lambda(U)$ is the singular value vector, thus there exists a constant $r(C)=N_v^2r'(C)$ such that $A \leq r(C)\|U\|_{S_\infty}^2$.
\end{proof}

\begin{lem}\label{lem:lse2} 
Consider log-sum-exp-square function $lse2_{\bm\omega}(\mathbf{x})=\log\sum_{i}\omega_i\exp(x_i^2/2)$ where $\bm\omega$ is independent of $\mathbf{x}$, if the domain of this function satisfies the bound condition $\|\mathbf{x}\|_2\leq r$, then the following inequality $lse2_{\bm\omega}(\mathbf{x}+\Delta\mathbf{x}) \leq lse2_{\bm\omega}(\mathbf{x}) + \langle \nabla lse2_{\bm\omega}(\mathbf{x}), \Delta\mathbf{x}\rangle + \left(\frac{1}{2}+\frac{3r^2}{4} \right)\|\Delta\mathbf{x}\|_\infty $ holds. 
\end{lem}
\begin{proof}
Denote $lse2_{\bm\omega}(\mathbf{x})$ as $l(\mathbf{x})$ for simplicity.
\begin{align*}
\frac{\partial}{\partial x_i} l(\mathbf{x})&= \frac{x_i\omega_i\exp(x_i^2/2)}{s(\mathbf{x})} \\
\frac{\partial^2}{\partial x_i\partial x_j} l2(\mathbf{x})&= \delta_{ij}(1+x_i^2)\frac{\omega_i\exp(x_i^2/2)}{s(\mathbf{x})} -x_ix_j\frac{\omega_i\exp(x_i^2/2)\omega_j\exp(x_j^2/2)}{s(\mathbf{x})^2} 
\end{align*}
where $s(\mathbf{x}) = \exp(l(\mathbf{x}))$. Let $\pi_i=\frac{\omega_i\exp(x_i^2/2)}{s(\mathbf{x})}$, then $\sum_i \pi_i=1$, $(1+x_i^2)\pi_i - x_i^2\pi_i^2 \leq \pi_i + \frac{x_i^2}{4}$.
Then for $\mathbf{u}$ where $|u_i|<|x_i|,\forall i$,
\begin{align*}
l(\mathbf{x}+\mathbf{u}) = l(\mathbf{x}) + \langle\nabla l(\mathbf{x}),\mathbf{u} \rangle + \int_0^1 (1-t)\mathbf{u}^\top \mathbf{H}_{l(\mathbf{x}+t\mathbf{u})} \mathbf{u}\mathrm{d}t
\end{align*}
where $\mathbf{H}_{l(\mathbf{x}+t\mathbf{u})}$ is the hessian matrix of $f$ evaluated at $\mathbf{x}+t\mathbf{u}$.
The integral remainder can be locally bounded, since the following statement.
\begin{align*}
\mathbf{u}^\top \mathbf{H}_{l(\mathbf{x}+t\mathbf{u})} \mathbf{u} \leq \mathbf{u}^\top \text{diag}(\pi_i+(x_i+tu_i)^2/4) \mathbf{u} \leq \left(1+\|\mathbf{x}\|_2^2\right)\|\mathbf{u}\|_\infty^2
\end{align*}

Or we can write
\begin{align*}
\int_0^1(1-t)\mathbf{u}^\top \mathbf{H}_{f(\mathbf{x}+t\mathbf{u})}\mathrm{d}t \mathbf{u} &\leq \int_0^1(1-t)\mathbf{u}^\top \text{diag}(\pi_i+(x_i+tu_i)^2\pi_i) \mathbf{u} \mathrm{d}t \\
&\leq \int_0^1(1-t)\left(\|\mathbf{u}\|_\infty^2 + \|\mathbf{x}\|_\infty^2\|\mathbf{u}\|_\infty^2 + t^2\|\mathbf{u}\|_\infty^4 + t\|\mathbf{x}\|_\infty\|\mathbf{u}\|_\infty^3 \right) \mathrm{d}t \\
&=\frac{1}{2}\|\mathbf{u}\|_\infty^2 + \frac{1}{2}\|\mathbf{x}\|_\infty^2\|\mathbf{u}\|_\infty^2+\frac{1}{6}\|\mathbf{x}\|_\infty\|\mathbf{u}\|_\infty^3+\frac{1}{12}\|\mathbf{u}\|_\infty^4
\end{align*}
Thus we conclude the Lemma.
\end{proof}

\begin{thm}\label{thm:bound_W} When fixing other parameters, we have
\begin{align*}
f(W+U) &\leq f(W) + \langle \nabla_W f(W), U\rangle + \left(\frac{1}{2}+\frac{3r(R,C,\mathbf{b})^2}{4}\right)N_vN_h\|U\|_{S_\infty}^2 \label{eq:bound_W}
\end{align*}
where $r(R,C,\mathbf{b})$ depends on the parameter $C,\mathbf{b}$ in previous iteration.
\end{thm}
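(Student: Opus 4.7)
The plan parallels Theorem \ref{thm:bound_a}, with Lemma \ref{lem:lse2} replacing Lemma \ref{lem:lse} because $W$ now enters $f$ quadratically through $s_i(W)=C^{-1/2}(W\mathbf{h}_i+\mathbf{b})$. With the other parameters frozen, I would use the rewriting stated just before Lemma \ref{lem:lse2},
$$f(W)=\log\Bigl(\sum_{i=1}^{2^{N_h}}\omega_i\exp\bigl(\tfrac12\,s_i(W)^\top s_i(W)\bigr)\Bigr),$$
and regard it as an $lse2$-type object on the stacked vector $\mathbf{x}(W)\in\mathbb{R}^{N_v\cdot 2^{N_h}}$ whose blocks are the $s_i(W)$, with each weight $\omega_i$ shared across the $N_v$ coordinates of its block.

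Next I would verify the domain condition. The assumption $\|W\|_2\le R$ together with $\|\mathbf{h}_i\|_2\le\sqrt{N_h}$ and the fact that $C^{-1/2}$ is fixed for this iteration gives
$$\|s_i(W)\|_2\;\le\;\|C^{-1/2}\|_2\,(R\sqrt{N_h}+\|\mathbf{b}\|_2),$$
so the quantity governing the Hessian in the proof of Lemma \ref{lem:lse2} is controlled by a single constant $r(R,C,\mathbf{b})$ depending only on $R$, $\mathbf{b}$, and the spectrum of $C$. Applying Lemma \ref{lem:lse2} (combined with the chain rule through the affine map $W\mapsto\{s_i(W)\}$, which automatically identifies the linear term as $\langle\nabla_W f(W),U\rangle$) then yields
$$f(W+U)\;\le\;f(W)+\langle\nabla_W f(W),U\rangle+\Bigl(\tfrac12+\tfrac{3\,r(R,C,\mathbf{b})^2}{4}\Bigr)\|\Delta\mathbf{x}\|_\infty^2,$$
where $\Delta\mathbf{x}$ is the stack of the blocks $C^{-1/2}U\mathbf{h}_i$.

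The final step is the norm conversion. Each block satisfies
$$\|C^{-1/2}U\mathbf{h}_i\|_2\;\le\;\|C^{-1/2}\|_2\,\|U\|_{S_\infty}\,\sqrt{N_h},$$
and the standard inequality $\|u_i\|_2^2\le N_v\|u_i\|_\infty^2$ is what introduces the extra $N_v$ when the block-Hessian quadratic form from the proof of Lemma \ref{lem:lse2} is rewritten entrywise; the preconditioning constant $\|C^{-1/2}\|_2$ is absorbed into $r(R,C,\mathbf{b})$. Together these produce the claimed prefactor $\bigl(\tfrac12+\tfrac{3r^2}{4}\bigr)N_vN_h\,\|U\|_{S_\infty}^2$.

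I expect this last step to be the main obstacle: because Lemma \ref{lem:lse2} is stated for a scalar-indexed vector but the natural object here is a block vector of length $N_v\cdot 2^{N_h}$, one has to be careful tracking which factors ($N_v$ from the block dimension, $N_h$ from $\|\mathbf{h}_i\|_2^2$, and $\|C^{-1/2}\|_2$ from the preconditioning) belong in the leading $N_vN_h$ coefficient versus inside the local constant $r(R,C,\mathbf{b})$. As the authors note, the precise value of this constant is absorbed by the learning rate in practice, so only the form $c\,\|U\|_{S_\infty}^2$ — ensuring that the SSD update direction is well defined — is essential.
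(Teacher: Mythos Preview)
Your proposal is essentially the paper's proof, only more explicit: the paper's argument is the two-line sketch ``rewrite $f$ in the quadratic form with $s(W)=C^{-1/2}(W\mathbf{h}_i+\mathbf{b})$, then follow Lemma~\ref{lem:lse2} and the previous proofs,'' and you have filled in the domain check via $\|W\|_2\le R$ and the norm conversion to $\|U\|_{S_\infty}$ in exactly the way the paper intends.

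One caution on your phrasing: describing $f$ as $lse2$ on a stacked vector ``with each weight $\omega_i$ shared across the $N_v$ coordinates of its block'' is not literally correct, since that would give $\log\sum_{i,j}\omega_i\exp(s_{ij}^2/2)$ rather than $\log\sum_i\omega_i\exp\bigl(\tfrac12\|s_i\|_2^2\bigr)$. What actually works --- and what both you and the paper tacitly rely on --- is redoing the Hessian calculation from the proof of Lemma~\ref{lem:lse2} for the block structure: with $\pi_i\propto\omega_i\exp(\tfrac12\|s_i\|^2)$ one gets
\[
\mathbf{u}^\top H\mathbf{u}\;=\;\sum_i\pi_i\|u_i\|_2^2+\sum_i\pi_i(s_i^\top u_i)^2-\Bigl(\sum_i\pi_i s_i^\top u_i\Bigr)^2\;\le\;(1+\max_i\|s_i\|_2^2)\max_i\|u_i\|_2^2,
\]
after which your bound $\|u_i\|_2=\|C^{-1/2}U\mathbf{h}_i\|_2\le\|C^{-1/2}\|_2\sqrt{N_h}\,\|U\|_{S_\infty}$ and the absorption of constants into $r(R,C,\mathbf{b})$ proceed as you describe. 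You already flag this as the main obstacle, so this is a clarification rather than a correction.
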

\begin{proof}
Rewrite log partition function as quadratic form.
\begin{align*}
f(\bm\theta)=\log\left(\sum_{i=1}^{2^{N_h}}\omega_i\exp\left(\frac{1}{2} s(W)^\top  s(W) \right) \right)
\end{align*}
where $\omega_i=(2\pi)^{\frac{N_v}{2}}|C|^{\frac{1}{2}}\exp\left(\mathbf{h}_i^\top\mathbf{a}-\frac{1}{2}\mathbf{b}^\top C^{-1}\mathbf{b}\right)$ and $s(W) = C^{-1/2}(W\mathbf{h}_i+\mathbf{b})$. 
Follow Lemma \ref{lem:lse2} and previous proof can conclude the result. 
Notice $r(R,C,\mathbf{b})$ also depends the eigenvalues of $C$.
\end{proof}

\begin{thm}\label{thm:SSD}
Optimizing the local approximated loss function
$\arg\min_{\bm\theta_i}\langle \nabla_{\bm\theta_i}\mathcal{L}(\bm\theta_i^{old}), \bm\theta_i - \bm\theta_i^{old} \rangle + c_i\|\bm\theta_i - \bm\theta_i^{old}\|_\infty^2$ will obtain the updates for parameters in Algorithm 2, 3.
\end{thm}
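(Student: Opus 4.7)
The strategy is the standard two-stage reduction for norm-constrained linear minimization: for each parameter block $\bm\theta_i$, I would write $\Delta = \bm\theta_i - \bm\theta_i^{old}$, then separate the problem into an outer scalar optimization over the radius $r = \|\Delta\|$ and an inner problem over the direction. Once $r$ is fixed, the objective becomes $\langle \nabla_{\bm\theta_i}\mathcal{L}, \Delta\rangle + c_i r^2$, so the inner problem is a linear minimization of $\langle \nabla_{\bm\theta_i}\mathcal{L}, \Delta\rangle$ over the corresponding norm ball of radius $r$. The outer problem is then a one-dimensional quadratic in $r$ that can be solved in closed form.

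\textbf{Vector case (Algorithm \ref{alg:lU}).} Put $\mathbf{g} = \nabla_{\bm\theta_i}\mathcal{L}(\bm\theta_i^{old})$. By H\"older's inequality, for any $\Delta$ with $\|\Delta\|_\infty \leq r$ one has $\langle \mathbf{g}, \Delta\rangle \geq -r\,\|\mathbf{g}\|_1$, with equality attained at $\Delta^\star = -r\,\mathrm{sign}(\mathbf{g})$ (this is the $l_1$--$l_\infty$ duality pairing). Substituting back yields the scalar problem $\min_{r \geq 0}\bigl(-r\,\|\mathbf{g}\|_1 + c_i r^2\bigr)$, whose optimum is $r^\star = \|\mathbf{g}\|_1/(2c_i)$. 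Hence the overall minimizer is $\Delta^\star = -\tfrac{\|\mathbf{g}\|_1}{2c_i}\,\mathrm{sign}(\mathbf{g})$, which matches Algorithm \ref{alg:lU} after absorbing the constant $1/(2c_i)$ into the learning rate $\epsilon$.

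\textbf{Matrix case (Algorithm \ref{alg:SU}).} Let $G = \nabla_W \mathcal{L}(W^{old})$ with thin SVD $G = U\bm\Sigma V^\top$, and let $\Delta = W - W^{old}$. Here the inner product is the Frobenius trace pairing $\langle G, \Delta\rangle = \mathrm{tr}(G^\top \Delta)$, and the dual norm of $\|\cdot\|_{S_\infty}$ is the nuclear norm $\|\cdot\|_{S_1}$. By the von Neumann trace inequality, for any $\Delta$ with $\|\Delta\|_{S_\infty} \leq r$ we have $\langle G, \Delta\rangle \geq -r\,\|\bm\lambda(G)\|_1$, with equality attained at $\Delta^\star = -r\,U V^\top$ (whose singular values all equal $r$, aligned with those of $G$). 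Minimizing $-r\,\|\bm\lambda(G)\|_1 + c_i r^2$ in $r$ gives $r^\star = \|\bm\lambda(G)\|_1/(2c_i)$, so $\Delta^\star = -\tfrac{\|\bm\lambda(G)\|_1}{2c_i}\,U V^\top$, which is exactly the update in Algorithm \ref{alg:SU}, again with $1/(2c_i)$ folded into $\epsilon$.

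\textbf{Main obstacle.} The only non-elementary step is the matrix duality claim, i.e.\ identifying the minimizer of $\langle G, \Delta\rangle$ over the spectral-norm ball as $-r\,UV^\top$. This is where the von Neumann/Ky Fan trace inequality is needed, together with the observation that among matrices with $\|\Delta\|_{S_\infty}\leq r$ the extremal configuration has all singular values saturated at $r$ and left/right singular vectors matched to those of $G$. The vector case is just a reparametrization of this via diagonal matrices, and everything else reduces to a one-variable quadratic.
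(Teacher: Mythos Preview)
Your proof is correct and follows essentially the same two-stage strategy as the paper: fix the norm radius, solve the inner linear problem for the optimal direction, then minimize the resulting scalar quadratic. Your explicit invocation of H\"older duality and the von Neumann trace inequality is in fact more rigorous than the paper's argument, which in the matrix case simply posits the ansatz $\Delta\bm\theta_i = U\Lambda V^\top$ with the singular vectors of the gradient and does not justify why this form is optimal.
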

\begin{proof}
If $\bm\theta_i$ is a vector, we have
\begin{align*}
&\arg\min_{\bm\theta_i}\langle \nabla_{\bm\theta_i}\mathcal{L}(\bm\theta_i^{old}), \bm\theta_i - \bm\theta_i^{old} \rangle + c_i\|\bm\theta_i - \bm\theta_i^{old}\|_{l_\infty}^2 \\
=& \arg\min_{\bm\theta_i, t, |\theta_{ij}|<t, t>0}\langle \nabla_{\bm\theta_i}\mathcal{L}(\bm\theta_i^{old}), \bm\theta_i - \bm\theta_i^{old} \rangle + c_i t^2 \\
=& \arg\min_{\bm\theta_i, t, |\theta_{ij}|<t, t>0}\langle \nabla_{\bm\theta_i}\mathcal{L}(\bm\theta_i^{old}), -t\cdot\text{sign}(\nabla_{\bm\theta_i}\mathcal{L}(\bm\theta_i^{old})) \rangle + c_i t^2
\end{align*}
This will conclude the update in Algorithm 3.

If $\bm\theta_i$ is a matrix, we first apply SVD on $\nabla_{\bm\theta_i}\mathcal{L}(\bm\theta_i^{old})=U\Sigma V^\top$. Thus we can decompose $\Delta\bm\theta_i=\bm\theta_i-\bm\theta_i^{old}$ as $U\Lambda V^\top$.

We further have
\begin{align*}
&\arg\min_{\bm\theta_i}\langle \nabla_{\bm\theta_i}\mathcal{L}(\bm\theta_i^{old}), \Delta\bm\theta_i \rangle + c_i\|\Delta\bm\theta_i\|_{S_\infty}^2 \\
=& \arg\min_{\Delta\bm\theta_i, t, \|\Delta\bm\theta_i\|_{S_\infty}<t, t>0} \text{tr}(\nabla_{\bm\theta_i}\mathcal{L}(\bm\theta_i)\Delta\bm\theta_i) + c_i t^2 \\
=& \arg\min_{\Lambda, t, \|\Lambda\|_{S_\infty}<t, t>0} \bm\lambda\left(\nabla_{\bm\theta_i}\mathcal{L}(\bm\theta_i^{old})\right)^\top \Lambda + c_i t^2 
\end{align*}
This will conclude the update in Algorithm 2.
\end{proof}

\begin{figure}
\subfigure[$W$ by SGD]{
\includegraphics[width=60mm]{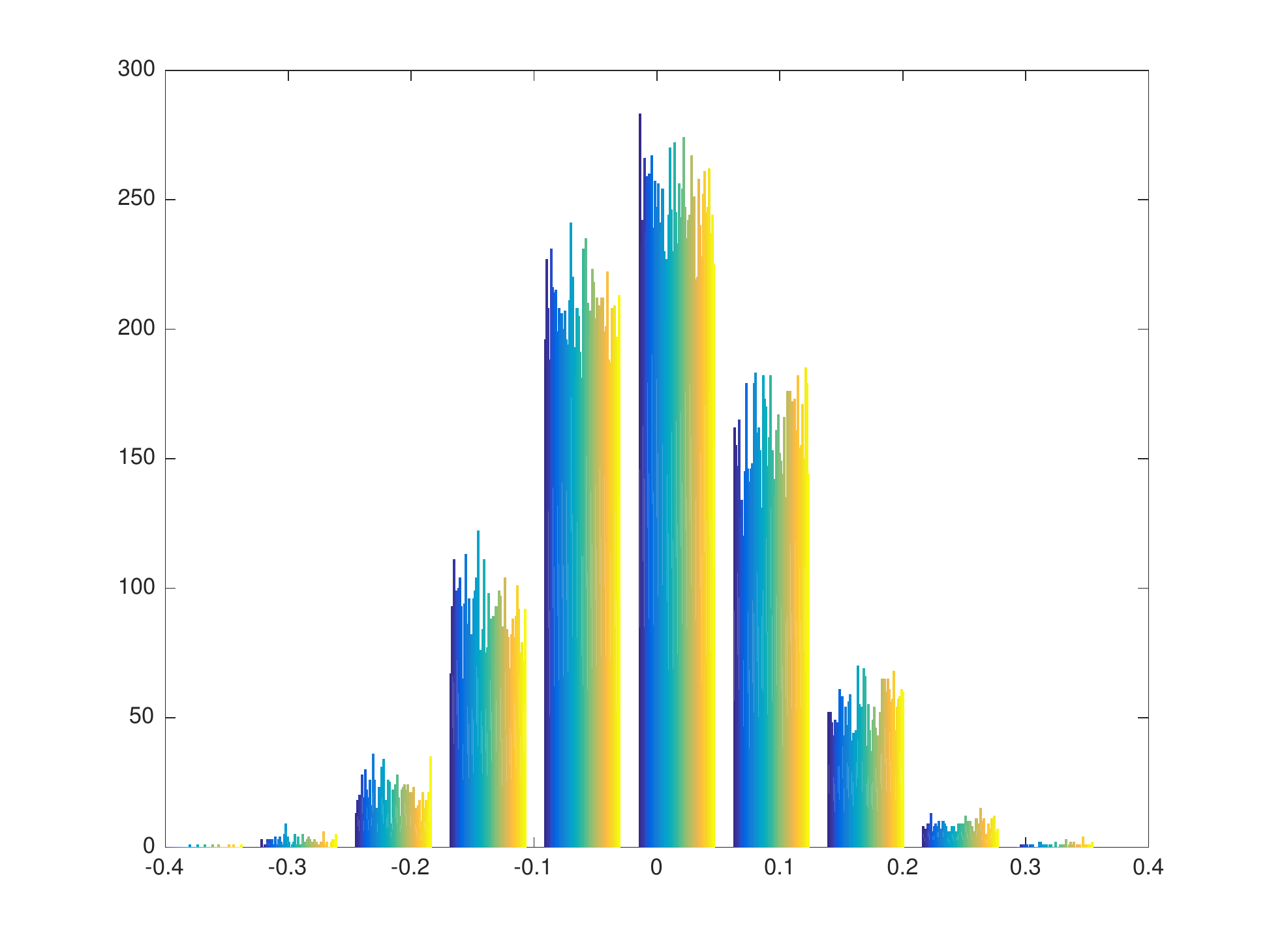}
}
\subfigure[$W$ by SSD]{
\includegraphics[width=60mm]{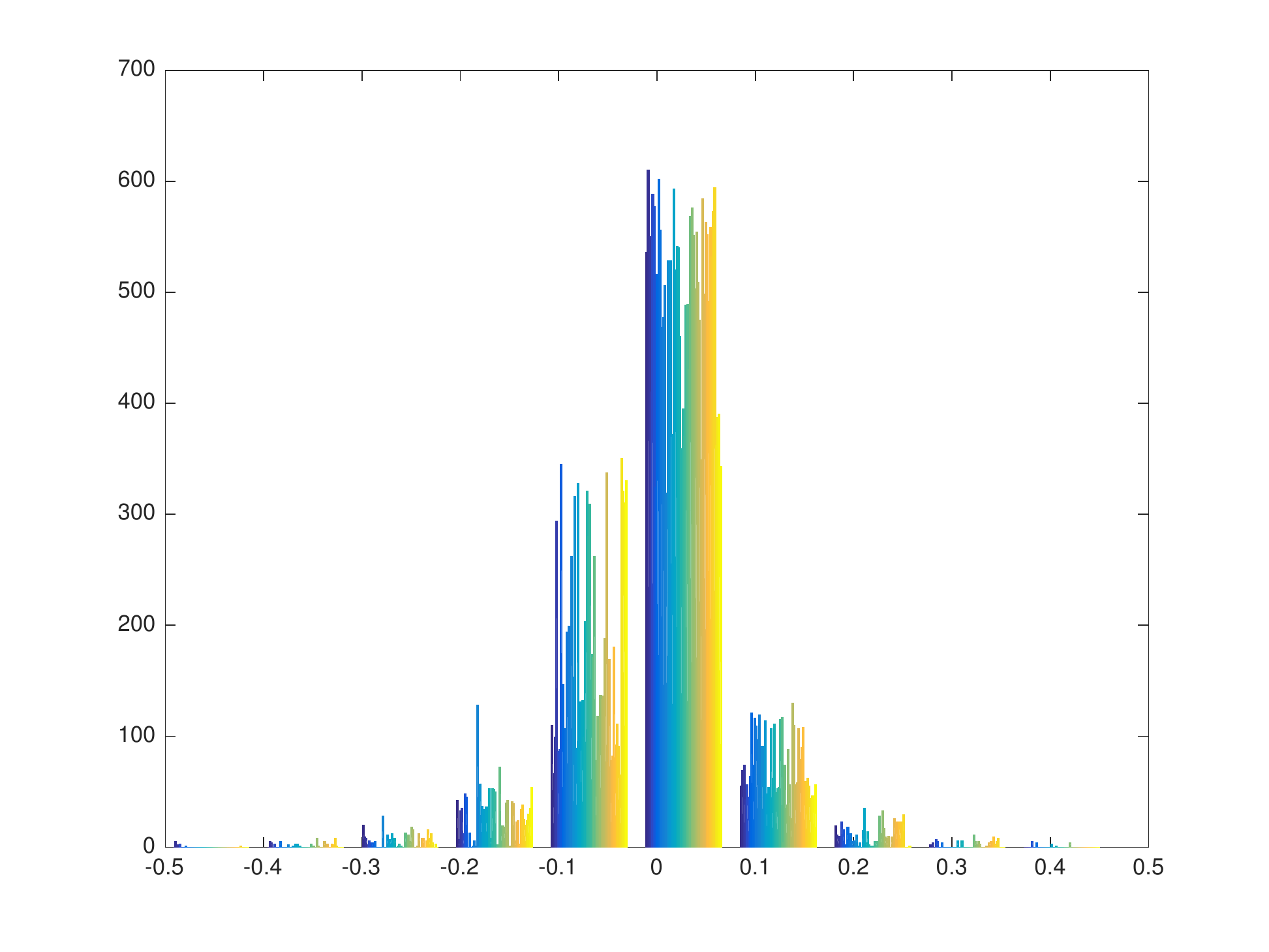}
}
\caption{\textbf{MNIST}: (a-b) Histogram to visualize the empirical distribution of $W$.}
\label{fig:mnist}
\end{figure}

\end{document}